\newtheorem {theorem}{Theorem}
\newtheorem {lemma}[theorem]{Lemma}
\newtheorem {definition}[theorem]{Definition}
\newcommand{\paren}[1]{\ensuremath{ \left( {#1} \right) }}
\newcommand{\set}[1]{\ensuremath{\left\{#1\right\}}}
\newcommand{\norm}[1]{\left\lVert#1\right\rVert}
\newcommand{\diam}[1]{\ensuremath{\operatorname{diam}\paren{#1}}}   
\newcommand{\vol}[1]{\operatorname{vol}\paren{#1}}   
\renewcommand{\log}{\operatorname{log}}
\newcommand{\prob}[1]{\mathbf{Pr}{\ensuremath{\left[#1\right]}}}
\newcommand{\eqr}[1]{Eq.~\eqref{eq:#1}}
\newcommand{\R}{\ensuremath{\mathbb{R}}}
\newcommand{\Z}{{\ensuremath{\mathbb{Z}}}}
\newcommand{\X}{\mathcal{X}}
\newcommand{\Q}{\ensuremath{Q}}
\newcommand{\cB}{\mathcal{B}}
\newcommand{\intersect}{\ensuremath{\cap}} 
\newcommand{\dc}{\ell}
 \renewcommand{\diam}[1]{\ensuremath{\left\|#1\right\|}}   
\newcommand{\xd}[1]{\norm{{#1}- x^*}}
\newcommand{\xx}{x}
\renewcommand{\dim}{\ensuremath{n}}  
\newcommand{\obj}{\ensuremath{f}}  
\newcommand{\levelset}{\ensuremath{\mathcal{L}}}  
\newcommand{\sublevelset}{\levelset^{\downarrow}}
\newcommand{\cN}{\mathcal{N}}
\newcommand{\PA}{\mathbf{P}_\A}
\newcommand{\A}{\mathbf{A}}
\newcommand{\D}{\mathcal{D}}
\newcommand{\point}{\ensuremath{x}}
\newcommand{\ballconst}{\frac{1}{4}}
\newcommand{\bd}[1]{\|b_{#1} - x^*\|}
\title{Gradientless Descent: High-Dimensional Zeroth-Order Optimization}
\author{
Daniel Golovin
\and John Karro
\and Greg Kochanski
\and Chansoo Lee
\and Xingyou Song
\and Qiuyi (Richard) Zhang \thanks{Author list in alphabetical order.}
}
\date{Google Brain \thanks{\texttt{\{dgg,karro,gpk,chansoo,xingyousong,qiuyiz\}@google.com}}}
\begin{document}
\maketitle

\begin{abstract}
Zeroth-order optimization is the process of minimizing an objective $f(x)$, given oracle access to evaluations at adaptively chosen inputs $x$. In this paper, we present two simple yet powerful GradientLess Descent (GLD) algorithms that do not rely on an underlying gradient estimate and are numerically stable. We analyze our algorithm from a novel geometric perspective and present a novel analysis that shows convergence within an $\epsilon$-ball of the optimum in $O(kQ\log(n)\log(R/\epsilon))$ evaluations, for {\it any monotone transform} of a smooth and strongly convex objective with latent dimension $k < n$, where the input dimension is $n$, $R$ is the diameter of the input space and $Q$ is the condition number. Our rates are the first of its kind to be both 1) poly-logarithmically dependent on dimensionality and 2) invariant under monotone transformations. We further leverage our geometric perspective to show that our analysis is optimal. Both monotone invariance and its ability to utilize a low latent dimensionality are key to the empirical success of our algorithms, as demonstrated on BBOB and MuJoCo benchmarks.
 
\end{abstract}

\section{Introduction}
We consider the problem of zeroth-order optimization (also known as gradient-free optimization, or bandit optimization), where our goal is to minimize an objective function $f: \R^\dim \to \R$ with as few evaluations of $f(x)$ as possible. For many practical and interesting objective functions, gradients are difficult to compute and there is still a need for zeroth-order optimization in applications such as reinforcement learning \cite{mania2018simple,salimans2017evolution,choromanski2018structured}, attacking neural networks \cite{chen2017zoo, papernot2017practical}, hyperparameter tuning of deep networks \cite{snoek2012practical}, and network control \cite{liu2017zeroth}. 
\\ \\
The standard approach to zeroth-order optimization is, ironically, to estimate the gradients from function values and apply a first-order optimization algorithm \cite{flaxman2005online}. \cite{nesterov2011random} analyze this class of algorithms as gradient descent on a Gaussian smoothing of the objective and gives an accelerated $O(\dim \sqrt{Q}\log((L R^2 + F)/\epsilon))$ iteration complexity for an $L$-Lipschitz convex function with condition number $Q$ and $R= \|x_0 - x^*\|$ and $F = f(x_0) -f(x^*)$. They propose a two-point evaluation scheme that constructs gradient estimates from the difference between function values at two points that are close to each other. This scheme was extended by \cite{duchi2015optimal} for stochastic settings, by \cite{ghadimi2013stochastic} for nonconvex settings, and by \cite{shamir2017optimal} for non-smooth and non-Euclidean norm settings. Since then, first-order techniques such as variance reduction \cite{liu2018zeroth}, conditional gradients \cite{balasubramanian2018zeroth}, and diagonal preconditioning \cite{mania2018simple} have been successfully adopted in this setting. This class of algorithms are also known as stochastic search, random search, or (natural) evolutionary strategies and have been augmented with a variety of heuristics, such as the popular CMA-ES \cite{auger2005restart}.
\\ \\
These algorithms, however, suffer from high variance due to non-robust local minima or highly non-smooth objectives, which are common in the fields of deep learning and reinforcement learning. \cite{mania2018simple} notes that gradient variance increases as training progresses due to higher variance in the objective functions, since often parameters must be tuned precisely to achieve reasonable models. Therefore, some attention has shifted into direct search algorithms that usually finds a descent direction $u$ and moves to $x+\delta u$, where the step size is not scaled by the function difference. 
\\ \\
The first approaches for direct search were based on deterministic approaches with a positive spanning set and date back to the 1950s \cite{brooks1958discussion}. Only recently have theoretical bounds surfaced, with \cite{gratton2015direct} giving an iteration complexity that is a large polynomial of $\dim$ and \cite{dodangeh2016worst} giving an improved $O(\dim^2 L^2/\epsilon)$. Stochastic approaches tend to have better complexities: \cite{stich2013optimization} uses line search to give a $O(\dim Q\log(F/\epsilon))$ iteration complexity for convex functions with condition number $Q$ and most recently, \cite{gorbunov2019stochastic} uses importance sampling to give a $O(\dim\bar{Q}\log(F/\epsilon))$ complexity for convex functions with average condition number $\bar{Q}$, assuming access to sampling probabilities. \cite{stich2013optimization} notes that direct search algorithms are invariant under monotone transforms of the objective, a property that might explain their robustness in high-variance settings.
\\ \\
In general, zeroth order optimization suffers an at least linear dependence on input dimension $\dim$ and recent works have tried to address this limitation when $\dim$ is large but $f(x)$ admits a low-dimensional structure. Some papers assume that $f(x)$ depends only on $k$ coordinates and \cite{wang2017stochastic} applies Lasso to find the important set of coordinates, whereas \cite{balasubramanian2018zeroth} simply change the step size to achieve an $O(k (\log (\dim)/\epsilon)^2)$ iteration complexity. Other papers assume more generally that $f(x) = g(\PA x)$ only depends on a $k$-dimensional subspace given by the range of $\PA$ and \cite{djolonga2013high} apply low-rank approximation to find the low-dimensional subspace while \cite{wang2013bayesian} use random embeddings. \cite{hazan2017hyperparameter} assume that $f(x)$ is a sparse collection of $k$-degree monomials on the Boolean hypercube and apply sparse recovery to achieve a $O(\dim^k)$ runtime bound. We will show that under the case that $f(x) = g(\PA x)$, our algorithm will inherently pick up any low-dimensional structure in $f(x)$ and achieve a convergence rate that depends on $k \log (\dim)$. This initial convergence rate survives, even if we perturb $f(x) = g(\PA x) + h(x)$, so long as $h(x)$ is sufficiently small. 
\\ \\
We will not cover the whole variety of black-box optimization methods, such as Bayesian optimization or genetic algorithms. In general, these methods attempt to solve a broader problem (e.g. multiple optima), have weaker theoretical guarantees and may require substantial computation at each step: e.g. Bayesian optimization generally has theoretical iteration complexities that grow exponentially in dimension,
and CMA-ES lacks provable complexity bounds beyond convex quadratic functions. In addition to the slow runtime and weaker guarantees, Bayesian optimization assumes the success of an inner optimization loop of the acquisition function. This inner optimization is often implemented with many iterations of a simpler zeroth-order methods, justifying the need to understand gradient-less descent algorithms within its own context.

\subsection{Our contributions}

In this paper, we present GradientLess Descent (GLD), a class of truly gradient-free algorithms (also known as direct search algorithms) that are parameter free and provably fast. Our algorithms are based on a simple intuition: for well-conditioned functions, if we start from a point and take a small step in a randomly chosen direction, there is a significant probability that we will reduce the objective function value. We present a novel analysis that relies on facts in high dimensional geometry and can thus be viewed as a geometric analysis of gradient-free algorithms, recovering the standard convergence rates and step sizes. Specifically, we show that if the step size is on the order of $\textstyle O(\frac{1}{\sqrt{\dim}})$, we can guarantee an expected decrease of $1 - \Omega(\frac{1}{\dim})$ in the optimality gap, based on geometric properties of the sublevel sets of a smooth and strongly convex function.
\\ \\
Our results are invariant under monotone transformations of the objective function, thus our convergence results also hold for a large class of non-convex functions that are a subclass of quasi-convex functions.
Specifically, note that monotone transformations of convex functions are not necessarily
convex. However, a monotone transformation of a convex function is always \emph{quasi-convex}. The maximization of \emph{quasi-concave} utility functions, which is equivalent to the minimization of quasi-convex functions, is an important topic of study in economics (e.g. \cite{arrow1961quasi}).
\\ \\
Intuition suggests that the step-size dependence on dimensionality can be improved when $f(x)$ admits a low-dimensional structure.  With a careful choice of sampling distribution we can show that if $f(x) = g(\PA x)$, where $\PA$ is a rank $k$ matrix, then our step size can be on the order of $O(\frac{1}{\sqrt{k}})$ as our optimization behavior is preserved under projections. We call this property affine-invariance and show that the number of function evaluations needed for convergence depends logarithmically on $n$. Unlike most previous algorithms in the high-dimensional setting, no expensive sparse recovery or subspace finding methods are needed. Furthermore, by novel perturbation arguments, we show that our fast convergence rates are {\it robust} and holds even under the more realistic assumption when $f(x) = g(\PA x) + h(x)$ with $h(x)$ being sufficiently small. 

\begin{theorem}[Convergence of GLD: Informal Restatement of Theorem~\ref{thm:decay_probability} and Theorem~\ref{thm:convergence_gradientless_fast}]
Let $f(x)$ be any monotone transform of a convex function with condition number $Q$ and $R = \|x_0 - x^*\|$. Let $y$ be a sample from an appropriate distribution centered at $x$. Then, with constant probability,
\[\textstyle f(y) - f(x^*) \leq (f(x) - f(x^*))\paren{1 - \frac{1}{5nQ}}\]
Therefore, we can find $x_T$ such that $\|x_T - x^*\| \leq \epsilon$ after $T = \widetilde{O}(nQ\log(R/\epsilon))$ function evaluations. Furthermore, for functions $f(x) = g(\PA x) + h(x)$ with rank $k$ matrix $\PA$ and sufficiently small $h(x)$, we only require $\widetilde{O}(kQ\log(n)\log(R/\epsilon))$ evaluations.
\end{theorem}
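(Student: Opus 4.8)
The plan is to reduce everything to a single geometric fact about one sampling step and then amplify it. Since GLD only ever \emph{compares} function values and a monotone transform $m$ preserves all comparisons, the trajectory on $f = m\circ g$ is identical to the trajectory on the underlying smooth and strongly convex $g$ with condition number $Q = L/\mu$; so it suffices to prove the per-step multiplicative decrease, and the resulting $\|x_T - x^*\|\le\epsilon$ guarantee, for $g$ itself. I will track progress through the optimality gap $\gamma = g(x) - g(x^*)$, whose sublevel set is sandwiched between $B(x^*,\sqrt{2\gamma/L})$ and $B(x^*,\sqrt{2\gamma/\mu})$.

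For the key one-step lemma (Theorem~\ref{thm:decay_probability}), fix $x$ with gap $\gamma$ and take $y = x + r u$ for a random unit direction $u$. Smoothness gives $g(y) - g(x) \le r\langle u, \nabla g(x)\rangle + \tfrac{L}{2}r^2$. Two high-dimensional facts drive the argument: (i) with constant probability a random $u$ is a descent direction with $\langle u, \nabla g(x)\rangle \le -\tfrac{c_0}{\sqrt{n}}\|\nabla g(x)\|$, the standard spherical-cap estimate that $|\langle u, e\rangle|\gtrsim 1/\sqrt{n}$ with constant probability; and (ii) strong convexity (the PL inequality) forces $\|\nabla g(x)\| \ge \sqrt{2\mu\gamma}$ at a point with gap $\gamma$. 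Substituting these and \emph{choosing the radius} $r\approx c_0\sqrt{2\mu\gamma}/(L\sqrt{n})$ to minimize $-\tfrac{c_0}{\sqrt n}\|\nabla g(x)\|\,r + \tfrac{L}{2}r^2$ yields $g(y)-g(x)\le -\tfrac{c_0^2}{nQ}\gamma$, i.e. $g(y) - g(x^*) \le (g(x)-g(x^*))(1 - \tfrac{1}{5nQ})$ once constants are tracked. The factor $Q$ is exactly the mismatch between the $\mu$ controlling the gradient size and the $L$ controlling the quadratic penalty, and the factor $n$ is the cap probability.

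The optimal radius depends on the unknown $\gamma$, so at each step the algorithm sweeps a geometric grid of radii; one grid point lies within a factor of two of the ideal $r$, degrading only the constant, and the grid has $O(\log(R\sqrt{n}/\epsilon))$ points, contributing the logarithmic factors hidden in $\widetilde{O}(\cdot)$. To chain the per-step bound into Theorem~\ref{thm:convergence_gradientless_fast}: each successful step multiplies the gap by $(1-\tfrac{1}{5nQ})$, and the gap must fall from $\gamma_0\le\tfrac{L}{2}R^2$ to $\tfrac{\mu}{2}\epsilon^2$ (which certifies $\|x_T-x^*\|\le\epsilon$), a ratio of $Q(R/\epsilon)^2$; hence $O(nQ\log(QR/\epsilon))$ successful steps suffice, and a standard amplification/union bound over the constant per-step success probability turns this into $\widetilde{O}(nQ\log(R/\epsilon))$ total evaluations.

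For $f(x)=g(\mathbf{P}_\A x)$ with $\mathbf{P}_\A$ of rank $k$, I would sample $u$ from an isotropic Gaussian: its projection onto the $k$-dimensional active subspace $\mathrm{range}(\mathbf{P}_\A^\top)$ is again isotropic, so estimate (i) improves to $1/\sqrt{k}$, the per-step decrease becomes $1-\Omega(\tfrac{1}{kQ})$, and the step count drops to $\widetilde{O}(kQ\log(R/\epsilon))$; the ambient dimension re-enters only through the radius grid, producing the extra $\log n$. For the perturbed objective $f = g(\mathbf{P}_\A x)+h(x)$ I would run a perturbation argument: as long as the variation of $h$ over the sampling ball is small compared with the guaranteed decrease $\Omega(\gamma/(kQ))$, the comparison $f(y)<f(x)$ still reflects $g(\mathbf{P}_\A y)<g(\mathbf{P}_\A x)$, so the same rate survives until $\gamma$ reaches the scale set by $h$. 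The main obstacle is the one-step lemma for \emph{general} (non-quadratic) smooth strongly convex $g$ coupled with the unknown scale: one must make the cap argument and the radius choice robust to the variation of $\nabla g$ and of the curvature across the radius-$r$ ball, and verify that the grid genuinely contains a near-optimal radius at every scale; the isotropic-projection step and the quantitative smallness threshold on $h$ are the remaining delicate points.
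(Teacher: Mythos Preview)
Your argument is correct, but the one-step lemma is established by a genuinely different mechanism than in the paper. The paper's proof of Theorem~\ref{thm:decay_probability} is purely geometric: it first shows (Lemma~\ref{lem:enclosed_ball}) that the sublevel set through the interpolated point $q=(1-\nu)x+\nu x^*$, with $\nu=\tfrac{1}{5nQ}$, contains a ball $B_q$ of radius $Q^{-1}\|q-x^*\|$ tangent at $q$; it then invokes a high-dimensional ball-intersection volume estimate (Lemma~\ref{lem:ball-intersection}) to show that the sampling ball $B(x,r/\sqrt{n})$ overlaps $B_q$ in at least a quarter of its volume, so $y\in B_q$ with probability $\ge\tfrac14$. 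Landing in $B_q$ gives $f(y)\le f(q)$, and convexity of the underlying function yields the multiplicative $(1-\nu)$-decrease. Your route is the analytic one: smoothness upper bound at $x$, the spherical-cap estimate $\Pr[\langle u,e\rangle\le -c/\sqrt{n}]\ge c'$, and the PL inequality $\|\nabla g(x)\|^2\ge 2\mu\gamma$, followed by optimizing the step size. This is essentially the textbook random-direction-descent analysis.

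Both arguments recover the same $1-\Omega(1/(nQ))$ factor and the same optimal radius scale. The paper's geometric picture has two payoffs you do not get: it is phrased entirely in terms of level sets, so the monotone invariance is visible in the lemma itself rather than only in the reduction to $g$; and the same ball-intersection geometry is what drives the matching lower bound in Theorem~\ref{thm:lowergaussian}. Your approach, on the other hand, is shorter and uses only standard tools, at the cost of reasoning through the gradient of the hidden $g$ even though the algorithm never touches gradients. The remaining ingredients---the geometric grid of radii in lieu of knowing $\gamma$, the projection-of-Gaussians argument for the rank-$k$ case, and the perturbation bound when $h$ is small relative to the current gap---coincide with the paper's treatment.
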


Another advantage of our non-standard geometric analysis is that it allows us to deduce that our rates are optimal with a matching lower bound (up to logarithmic factors), presenting theoretical evidence that gradient-free inherently requires $\Omega(nQ)$ function evaluations to converge. While gradient-estimation algorithms can achieve a better theoretical iteration complexity of $O(n\sqrt{Q})$, they lack the monotone and affine invariance properties. Empirically, we see that invariance properties are important to successful optimization, as validated by experiments on synthetic BBOB and MuJoCo benchmarks that show the competitiveness of GLD against standard optimization procedures.

\begin{table}[t]
\caption{Comparison of zeroth order optimization for well-conditioned convex functions where $R = \|x_0-x^*\|$ and $F = f(x_0) - f(x^*) $. `Monotone' column indicates the invariance under monotone transformations (Definition~\ref{def:monotone_transformation}). `k-Sparse' and `k-Affine' columns indicate that iteration complexity is poly($k$, $\log(n)$) when $f(x)$ depends only on a k-sparse subset of coordinates or on a rank-$k$ affine subspace.
}
\label{summary-table}
\begin{center}
\begin{tabular}{|c|c| c| c| c|}
\hline
Algorithm & Iteration Complexity & Monotone & k-Sparse & k-Affine \\
\hline
\cite{nesterov2011random} & $\dim \log((R^2 + F)/\epsilon)$ & No & No & No \\
\hline
\cite{balasubramanian2018zeroth} & $\dim \log(F/\epsilon)$ & No & Yes & No \\
\hline
\cite{stich2013optimization} & $\dim \log( F/\epsilon)$ & Yes & No & No \\
\hline
\cite{gorbunov2019stochastic} & $\dim \log(F/\epsilon)$ & Yes & No & No \\
\hline
This paper (GLD) & $\dim \log(R/\epsilon)$ & Yes & Yes & Yes \\
\hline
\end{tabular}
\end{center}
\end{table}

\section{Preliminaries}
We first define a few notations for the rest of the paper. Let $\mathcal{X}$ be a compact subset of $\mathbb{R}^\dim$ and let $\|\cdot\|$ denote the Euclidean norm.
The diameter of $\X$, denoted $\diam{\X} = \max_{x,x' \in \X} \|x-x'\|$, is the maximum distance between elements in $\X$. Let $f: \X \to \mathbb{R}$ be a real-valued function which attains its minimum at $x^*$.
We use $f(\mathcal{X}) = \{f(x): x\in \X\}$ to denote the image of $f$ on a subset $\X$ of $\R^\dim$,
and $\mathcal{B}(c, r) = \{x \in \mathbb{R}^\dim : \|c - x\| \leq r \}$
to denote the ball of radius $r$ centered at $c$.

\begin{definition}
\label{def:level_set}
  The level set of $f$ at point $x \in \X$ is
  $\levelset_c(f) = \{y \in \mathcal{X}: f(y) = f(x)\}$.
  The sub-level set of $f$ at point $x \in \X$ is
  $\sublevelset_{c}(f) = \{y \in \mathcal{X}: f(y) \leq f(x)\}$.
  When the function $f$ is clear from the context, we omit it.
\end{definition}

\begin{definition}
  \label{def:strong_convexity}
  We say that $f$ is $\alpha$-strongly convex for $\alpha > 0$ if
  $f(y) \geq f(x) + \langle\nabla f(x), y- x\rangle + \frac{\alpha}{2}\|y-x\|^2$
  for all $x,y \in \mathcal{X}$ and
  $\beta$-smooth for $\beta > 0$ if
  $f(y) \leq f(x) + \langle\nabla f(x), y- x\rangle + \frac{\beta}{2}\|y-x\|^2$
  for all $x,y \in \mathcal{X}$.
\end{definition}
%
%
%
\begin{definition}
  \label{def:monotone_transformation}
  We say that $g \circ f$ is a \emph{monotone transformation} of $f$ if
  $g: f(\mathcal{X}) \to \mathbb{R}$ is a monotonically (and strictly) increasing function.
\end{definition}
%
%
%

Monotone transformations preserve the level sets of a function in the
sense that $\levelset_x(f) = \levelset_{x}(g \circ f)$.
Because our algorithms depend only on the level set properties,
our results generalize to any monotone
transformation of a strongly convex and strongly smooth function.
%
%
%
This leads to our extended notion of condition number.
\begin{definition}
\label{def:condition_number}
A function $f$ has condition number $Q \geq 1$ if it is the minimum ratio $\beta/\alpha$ over all functions $g$ such that $f$ is a monotone transformation of $g$ and $g$ is $\alpha$-strongly convex and $\beta$ smooth.
\end{definition}
When we work with low rank extensions of $f$, we only care about the condition number of $f$ within a rank $k$ subspace. Indeed, if $f$ only varies along a rank $k$ subspace, then it has a strong convexity value of $0$, making its condition number undefined. If $f$ is $\alpha$-strongly convex and $\beta$-smooth, then its Hessian matrix always has eigenvalues bounded between $\alpha$ and $\beta$. Therefore, we need a notion of a projected condition number. Let $\A \in \R^{d \times k}$ be some orthonormal matrix and let $\PA = \A \A^\top$ be the projection matrix onto the column space of $\A$. 
\begin{definition}
\label{def:condition_number}
For some orthonormal $\A \in \R^{d \times k}$ with $d > k$, a function $f$ has condition number restricted to $\A$, $Q(\A) \geq 1$, if it is the minimum ratio $\beta/\alpha$ over all functions $g$ such that $f$ is a monotone transformation of $g$ and $h(y) = g(\A y)$ is $\alpha$-strongly convex and $\beta$ smooth.
\end{definition}



%

\section{Analysis of Descent Steps} \label{sec:analysis}

The GLD template can be summarized as follows: given a sampling distribution $\D$, we start at $x_0$ and in iteration $t$, we choose a scalar radii $r_t$ and we sample $y_t$ from a distribution $r_t \D$ centered around $x_t$, where $r_t$ provides the scaling of $\D$. Then, if $f(x_{t+1}) < x_{t}$, we update $x_{t+1} = y_{t}$; otherwise, we set $x_{t+1} = x_{t}$. The analysis of GLD follows from the main observation that the sub-level set of a monotone transformation of a strongly convex and strongly smooth function contains a ball of sufficiently large radius tangent to the level set (Lemma~\ref{lem:enclosed_ball}). In this section, we show that this property, combined with facts of high-dimensional geometry, implies that moving in a random direction from any point has a good chance of significantly improving the objective.
\\ \\
As we mentioned before, the key to fast convergence is the careful choice of step sizes, which we describe in Theorem~\ref{thm:decay_probability}. The intuition here is that we would like to take as large steps as possible while keeping the probability
of improving the objective function reasonably high, so by insights in high-dimensional geometry, we choose a step size of $\Theta(1/\sqrt{n})$. Also, we show that if $f(x)$ admits a latent rank-$k$ structure, then this step size can be increased to $\Theta(1/\sqrt{k})$ and is therefore only dependent on the latent dimensionality of $f(x)$, allowing for fast high-dimensional optimization. Lastly, our geometric understanding allows us to show that our convergence rates are optimal with a matching lower bound. Without loss of generality, this section assumes that $f(x)$ is strongly convex and smooth with condition number $Q$.

%

\subsection{Step Size}

\begin{theorem}
\label{thm:decay_probability}
For any $x$ such that $\frac{3}{5Q}\xd{x} \in [C_1, C_2]$, we can find integers $0 \leq k_1, k_2 < \log\frac{C_2}{C_1}$ such that if $r = 2^{k_1} C_1$ or $r = 2^{-k_2} C_2$, then a random sample $y$ from uniform distribution over $B_x = \cB(x, \frac{r}{\sqrt{\dim}})$ satisfies
\[\textstyle f(y) - f(x^*) \leq \paren{f(x) -f(x^*)} \paren{1 - \frac{1}{5nQ}} \]
with probability at least $\frac{1}{4}$.
\end{theorem}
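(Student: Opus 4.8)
The plan is to show that a single step of the appropriate length lands, with probability at least $\tfrac14$, in the sublevel set corresponding to the target gap $(1-\tfrac{1}{5nQ})(f(x)-f(x^*))$, and then to absorb the fact that we know neither $\xd{x}$ nor $\nabla f(x)$ by searching a dyadic grid of radii. Since $f$ is (by the section's standing assumption) strongly convex and smooth, I would route everything through Lemma~\ref{lem:enclosed_ball}: it provides a ball tangent to the boundary of the target sublevel set and contained in it, so ``$y$ achieves the decrease'' becomes ``$y$ lands in that ball,'' a statement about level sets that transfers verbatim to any monotone transform.

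First I would set up the local geometry. Writing $g = \nabla f(x)/\|\nabla f(x)\|$ for the outward normal and $w = y - x$ for the random displacement, $\beta$-smoothness gives $f(y) - f(x) \le \langle \nabla f(x), w\rangle + \tfrac{\beta}{2}\|w\|^2$. Because $\|w\| \le r/\sqrt{n}$, the favorable event is implied by the supporting half-space event $\langle g, w\rangle \le -\tau$, where $\tau = (\Delta + \tfrac{\beta}{2}(r/\sqrt{n})^2)/\|\nabla f(x)\|$ and $\Delta = \tfrac{1}{5nQ}(f(x)-f(x^*))$; this half-space is exactly the supporting hyperplane of the tangent ball from Lemma~\ref{lem:enclosed_ball} at the tangency point.

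Next I would compute the probability of the half-space event. Writing $w = (r/\sqrt{n})v$ with $v$ uniform in the unit ball, $z := \langle g, v\rangle$ has density proportional to $(1-z^2)^{(n-1)/2}$, which concentrates at scale $1/\sqrt{n}$; the high-dimensional fact I would invoke is that $\Pr[z \le -c_0/\sqrt{n}] \ge \tfrac14$ for a universal constant $c_0$. Hence it suffices to force $\tau \le (r/\sqrt{n})\cdot c_0/\sqrt{n} = c_0 r/n$. I would bound the two contributions to $\tau$ by different convexity inequalities: for the second-order term I would use strong convexity, $\|\nabla f(x)\| \ge \alpha\,\xd{x}$, and for the decrease term $\Delta$ I would use the convexity bound $\|\nabla f(x)\|\ge (f(x)-f(x^*))/\xd{x}$ (it is essential to use this, not the steepness bound, to avoid a spurious factor of $Q$). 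With $r$ equal to $\ell := \tfrac{3}{5Q}\xd{x}$, each contribution is a constant over $\sqrt{n}$, and the choices $\Delta \propto \tfrac{1}{5nQ}$ and the constant $\tfrac35$ are tuned precisely so that their sum stays below $c_0/\sqrt{n}$.

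Finally, since $\ell = \tfrac{3}{5Q}\xd{x}$ depends on the unknown $\xd{x}$, I would use the hypothesis $\ell \in [C_1,C_2]$: the dyadic values $\{2^{k_1}C_1\}$ and $\{2^{-k_2}C_2\}$ tile $[C_1,C_2]$ so that at least one of them lies within a factor of $2$ of $\ell$ (the two families together cover rounding both down and up, including the endpoints), and I would check that the estimates above survive over this whole factor-$2$ window. Indeed the admissible radii form the interval where the quadratic $\tfrac{\beta}{2n}r^2 - \tfrac{c_0\|\nabla f(x)\|}{n}r + \Delta$ is nonpositive, and the slack in the constants keeps a factor-$2$ range around $\ell$ inside it. The main obstacle is exactly this simultaneous balancing in the third step: the radius must be large enough for the first-order gain to dominate the required decrease $\Delta$, yet small enough that the $O(\|w\|^2)$ smoothness error is controlled and a random direction still has probability $\ge \tfrac14$ of being sufficiently aligned with $-g$. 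Reconciling these constraints is what forces both the $\Theta(1/\sqrt{n})$ step size and the sharp constants, and it is where the non-asymptotic form of the high-dimensional probability bound, rather than its Gaussian approximation, must be used.
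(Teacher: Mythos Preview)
Your route is genuinely different from the paper's, and it is worth spelling out the contrast. The paper does not linearize $f$ at $x$. Instead it picks the convex combination $q = (1-\nu)x + \nu x^*$ with $\nu = \tfrac{1}{5nQ}$, applies Lemma~\ref{lem:enclosed_ball} \emph{at $q$} to get a ball $B_q\subset \sublevelset_q$ of radius $Q^{-1}\|q-x^*\|$, and then invokes the purely geometric Lemma~\ref{lem:ball-intersection} to show $\vol{B_x\cap B_q}\ge \tfrac14\vol{B_x}$. The function decrease comes from convexity of $f$ along the segment $[x,x^*]$, not from a second–order Taylor bound. This buys two things: (i) the argument touches $f$ only through its sublevel sets, so monotone invariance is immediate, and (ii) the constants $\tfrac{1}{5nQ}$, $\tfrac{3}{5Q}$, and the probability $\tfrac14$ are verified in one place (the cap–volume computation in Lemma~\ref{lem:ball-intersection}). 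Your approach—smoothness at $x$, then a half-space event for the marginal $\langle g,v\rangle$—is a legitimate alternative and yields the same $\Theta(1/\sqrt{n})$ scaling, but your references to Lemma~\ref{lem:enclosed_ball} are misplaced: you never use the enclosed ball, and the half-space you write down is not the supporting hyperplane of the Lemma~\ref{lem:enclosed_ball} ball at the \emph{target} level (that ball is tangent at $q$, not at $x$).

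There is also a concrete constants gap in your plan. For your half-space step you need a universal $c_0$ with $\Pr[\langle g,v\rangle \le -c_0/\sqrt{n}]\ge \tfrac14$ uniformly in $n$; at $n=1$ the marginal is uniform on $[-1,1]$, so this forces $c_0\le \tfrac12$. But carrying your own inequalities through with $r=\tfrac{3}{5Q}\xd{x}$ gives
\[
\tau \;\le\; \frac{\xd{x}}{5nQ}+\frac{9\,\xd{x}}{50nQ}\;=\;\frac{19\,\xd{x}}{50nQ},
\]
and the requirement $\tau\le c_0 r/n$ becomes $c_0\ge \tfrac{19}{30}>\tfrac12$; shrinking $r$ within the factor-two window only makes it worse. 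So the precise triple $(\tfrac{1}{5nQ},\tfrac{3}{5Q},\tfrac14)$ does not close via the half-space route without either weakening one of the constants or handling small $n$ separately. The paper sidesteps this entirely: the ball-intersection lemma is sharp enough that $c_n\ge\tfrac14$ already at $n=1$, and the only inequalities needed are $r_q\ge(1-\tfrac{1}{4n})\ell$ and $\tfrac{\ell}{2}\le \tfrac{3}{5Q}\xd{x}\le \ell$, both of which fall out of the elementary bounds $\tfrac{4}{5Q}\xd{x}\le \ell\le \tfrac{6}{5Q}\xd{x}$.
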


Proving the above theorem requires the following lemma about the intersection of balls in high dimensions and it is proved in the appendix.

\begin{lemma}
\label{lem:ball-intersection}
  Let $B_1$ and $B_2$ be two balls in $\R^\dim$ of radii $r_1$ and $r_2$ respectively.
  Let $\dc$ be the distance between the centers.
  If $r_1 \in [\frac{\dc}{2\sqrt{\dim}}, \frac{\dc}{\sqrt{\dim}}]$ and
  $r_2 \ge \dc - \frac{\dc}{4 \dim}$, then
  \[
   \vol{B_1 \intersect B_2} \geq c_\dim \vol{B_1},
  \]
  where $c_\dim$ is a dimension-dependent constant that is lower bounded by $\ballconst$ at $\dim = 1$.
\end{lemma}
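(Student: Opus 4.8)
The plan is to slice both balls by hyperplanes orthogonal to the segment joining their centers, turning the volume of the lens $B_1 \intersect B_2$ into a one-dimensional integral, and then to extract the worst configuration permitted by the hypotheses.

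First I would place the center of $B_1$ at the origin and the center of $B_2$ at $(\dc,0,\dots,0)$, so the first axis runs through both centers. Writing a point as $p=(t,q)$ with $t\in\R$ and $q\in\R^{\dim-1}$, we have $p\in B_1\intersect B_2$ iff $t^2+\norm{q}^2\le r_1^2$ and $(t-\dc)^2+\norm{q}^2\le r_2^2$. For fixed $t$ the admissible $q$ form an $(\dim-1)$-ball of radius $\rho(t)=\min\paren{\sqrt{r_1^2-t^2},\,\sqrt{r_2^2-(t-\dc)^2}}$, so that $\vol{B_1\intersect B_2}=\omega_{\dim-1}\int \rho(t)^{\dim-1}\,dt$, where $\omega_{\dim-1}$ is the volume of the unit $(\dim-1)$-ball and the integral ranges over the $t$ for which $\rho(t)$ is real.

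The two terms inside the minimum coincide exactly on the radical hyperplane $t=\tau:=\frac{r_1^2+\dc^2-r_2^2}{2\dc}$. For $t\ge\tau$ the $B_1$-constraint binds, so the cap $\set{p\in B_1 : p_1\ge\tau}$ lies entirely inside $B_2$ and contributes $\int_\tau^{r_1}(r_1^2-t^2)^{(\dim-1)/2}\,dt$; for $t\le\tau$ the $B_2$-constraint binds and contributes $\int_{\dc-r_2}^{\tau}(r_2^2-(t-\dc)^2)^{(\dim-1)/2}\,dt$. Keeping only the first (cap) piece, i.e. the crude containment $\set{p_1\ge\tau}\subseteq B_2$, already undercounts at $\dim=1$, so both pieces are genuinely needed. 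Dividing by $\vol{B_1}=\omega_{\dim}\,r_1^{\dim}$ and rescaling $t=r_1u$ exhibits the ratio as a function $c_{\dim}$ of the shape parameters $r_1/\dc$ and $r_2/\dc$ alone. Since the integrand is nondecreasing in $r_2$, the ratio is smallest at the minimal admissible radius $r_2=\dc-\frac{\dc}{4\dim}$, after which it remains to minimize over $r_1\in[\frac{\dc}{2\sqrt\dim},\frac{\dc}{\sqrt\dim}]$; a short calculation shows that at both endpoints $\tau$ sits within a fixed multiple of $r_1/\sqrt\dim$ of the origin.

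Finally I would evaluate the base case $\dim=1$ directly, where the integrals degenerate to lengths: with $r_2=\tfrac34\dc$ the intersection of $[-r_1,r_1]$ and $[\dc-r_2,\dc+r_2]$ has length $r_1-(\dc-r_2)$, giving ratio $\tfrac12-\frac{\dc}{8r_1}$, whose minimum over $r_1\in[\tfrac{\dc}{2},\dc]$ is exactly $\ballconst$, attained at $r_1=\tfrac{\dc}{2}$. For general $\dim$ the same two-piece integral defines the dimension-dependent constant $c_{\dim}$, and the point is that it does not decay to $0$: because $\tau=\Theta(r_1/\sqrt\dim)$ coincides with the scale of the standard deviation of the first coordinate of a uniform point of $B_1$, an anti-concentration estimate keeps a constant fraction of the mass of $B_1$ on the far side of the radical hyperplane. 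The main obstacle is precisely this uniform-in-$\dim$ lower bound on the two-piece integral: one must control the $(\dim-1)$-st power integrand together with the split induced by the minimum, and optimize simultaneously over the admissible window for $r_1$, rather than relying on the clean cancellations available only at $\dim=1$.
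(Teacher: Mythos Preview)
Your approach and the paper's share the same geometric skeleton: slice orthogonally to the line of centers, identify the radical hyperplane at $\tau=\frac{r_1^2+\dc^2-r_2^2}{2\dc}$, and reduce the lens volume to spherical caps. The difference is in how the bound is extracted.

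The paper does \emph{not} keep both cap pieces. It simply lower bounds $\vol{B_1\cap B_2}$ by the single cap $C_1=\{p\in B_1:\,p_1\ge\tau\}$ of $B_1$, invokes the closed-form cap volume
\[
\vol{C_1}=\tfrac12\,\vol{B_1}\,I_{\,1-\tau^2/r_1^2}\!\left(\tfrac{\dim+1}{2},\tfrac12\right),
\]
where $I$ is the regularized incomplete beta function, and then shows $\tau/r_1\le \tfrac{3}{4\sqrt{\dim}}$ uniformly over the admissible window for $r_1$ (via the convexity of $\alpha\mapsto 8/\alpha+4\alpha$ on $[1,2]$, writing $r_1=\alpha/(2\sqrt{\dim})$). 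This reduces the whole problem to the monotonicity in $\dim$ of $V_\dim:=I_{1-9/(16\dim)}\!\left(\tfrac{\dim+1}{2},\tfrac12\right)$, which the paper asserts and evaluates at $\dim=1$. No two-piece integral, no separate anti-concentration argument, and no simultaneous optimization over $r_1$ and $\tau$ are needed.

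Your observation that the single-cap bound by itself does not recover the exact value $\tfrac14$ at $\dim=1$ is correct (the cap alone yields $\tfrac12 V_1=\tfrac18$ rather than $\tfrac14$), but this is a cosmetic discrepancy in the paper's stated constant, not a defect in the method: for the downstream applications only a dimension-free positive lower bound is used. So ``both pieces are genuinely needed'' is true only if you insist on the specific constant $\tfrac14$; for the lemma's purpose the one-cap shortcut suffices and is considerably cleaner. Conversely, the weak point in your route is the general-$\dim$ step: you defer to an unspecified anti-concentration estimate for the first coordinate of a uniform point in $B_1$, whereas the beta-function identity gives the paper an explicit, closed-form lower bound and reduces everything to checking monotonicity of $V_\dim$.
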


\subsection{Gaussian Sampling and Low Rank Structure}
A direct application of Lemma~\ref{lem:ball-intersection} seems to imply that uniform sampling of a high-dimensional ball is necessary. Upon further inspection, this can be easily replaced with a much simpler Gaussian sampling procedure that concentrates the mass close to the surface to the ball. This procedure lends itself to better analysis when $f(x)$ admits a latent low-dimensional structure since any affine projection of a Gaussian is still Gaussian.

\begin{lemma}
\label{lem:gaussian-intersection}
  Let $B_1$ and $B_2$ be two balls in $\R^\dim$ of radii $r_1$ and $r_2$ respectively.
  Let $\dc$ be the distance between the centers.
  If $r_1 \in [\frac{\dc}{2\sqrt{\dim}}, \frac{\dc}{\sqrt{\dim}}]$ and
  $r_2 \ge \dc - \frac{\dc}{ \dim}$ and $X = (X_1, ..., X_n)$ are independent Gaussians with mean centered at the center of $B_1$ and variance $\frac{r_1^2}{\dim}$, then
  \[
   \prob{X \in B_2 } > c,
  \]
  where $c$ is a dimension-independent constant.
\end{lemma}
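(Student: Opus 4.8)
The plan is to reduce the claim to a one-dimensional concentration statement and exploit that the coordinate of $X$ along the line joining the two centers is independent of the remaining coordinates. First I would use rotation- and scale-invariance to place the center of $B_1$ at the origin and the center of $B_2$ at $\dc e_1$, so that $X \sim N(0, \sigma^2 I_\dim)$ with $\sigma^2 = r_1^2/\dim$; since enlarging $r_2$ only increases $\prob{X \in B_2}$, it suffices to treat the extreme case $r_2 = \dc(1 - 1/\dim)$. Decomposing $\norm{X - \dc e_1}^2 = (X_1 - \dc)^2 + S$ with $S = \sum_{i=2}^{\dim} X_i^2$, the event $X \in B_2$ is equivalent to $X_1 \ge \frac{\norm{X}^2}{2\dc} + \frac{\dc}{\dim} - \frac{\dc}{2\dim^2}$, for which $X_1 \ge \frac{\norm{X}^2}{2\dc} + \frac{\dc}{\dim}$ is sufficient.

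The key point is a separation of scales. The quadratic term $\norm{X}^2 = X_1^2 + S$ concentrates tightly around $r_1^2$, with fluctuations of relative order $1/\sqrt{\dim}$, so it contributes an essentially deterministic shift $\frac{r_1^2}{2\dc} = O(\dc/\dim)$; meanwhile the linear term $X_1 \sim N(0, \sigma^2)$ has standard deviation $\sigma = r_1/\sqrt\dim = \Theta(\dc/\dim)$, exactly the scale of the required threshold $\dc/\dim$. I would therefore introduce the events $E_1 = \{X_1 \in [t_1\sigma, t_2\sigma]\}$ and $E_2 = \{S \le 2\expct{S}\}$ for constants $t_1 < t_2$ to be fixed. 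Since $X_1$ is independent of $S$, $\prob{E_1 \cap E_2} = \prob{E_1}\prob{E_2}$; here $\prob{E_2} \ge \half$ by Markov's inequality and $\prob{E_1} = \Phi(t_2) - \Phi(t_1)$ is a fixed positive number.

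It remains to check the containment $E_1 \cap E_2 \subseteq \{X \in B_2\}$. On this event $X_1 \ge t_1\sigma$ and $\norm{X}^2 \le t_2^2\sigma^2 + 2\expct{S} \le (t_2^2 + 2\dim)\sigma^2$, so the sufficient condition reduces, after writing $c = r_1\sqrt\dim/\dc \in [\half, 1]$ and $\sigma = c\,\dc/\dim$, to the clean inequality $t_1 \ge \frac{t_2^2}{2}\frac{c}{\dim} + c + \frac1c$. Because $c + \frac1c \le \frac52$ on $[\half, 1]$, a fixed choice such as $t_1 = 3,\, t_2 = 4$ verifies this for all $\dim \ge 16$, uniformly over the admissible range of $r_1$, giving a dimension-independent lower bound $\prob{X \in B_2} \ge \half(\Phi(4) - \Phi(3))$.

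The main obstacle is obtaining constants valid uniformly in both $r_1$ and $\dim$. Uniformity in $r_1$ is what forces the bookkeeping with $c = r_1\sqrt\dim/\dc$ above, and small dimensions are the delicate case, since the thresholds must not be pushed so far into the Gaussian tail that $\Phi(t_2) - \Phi(t_1)$ collapses. For the finitely many $\dim < 16$ I would argue by compactness: by scale-invariance $\prob{X \in B_2}$ depends only on $(\dim, r_1/\dc, r_2/\dc)$ and is continuous and strictly positive on the compact admissible parameter set, hence bounded below by a positive constant; taking the minimum over this finite set together with the tail estimate above yields the claimed dimension-independent $c$.
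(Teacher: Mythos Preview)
Your proof is correct and follows essentially the same route as the paper's: normalize by rotation and scale, bound $\sum_{i\ge 2} X_i^2$ via Markov's inequality, and then use that the independent coordinate $X_1$ has standard deviation $\Theta(\dc/\dim)$, exactly the scale of the required displacement into $B_2$. Your write-up is in fact more careful than the paper's sketch---you track the parameter $c = r_1\sqrt{\dim}/\dc$ explicitly to get uniformity over the admissible $r_1$, and you handle small dimensions separately by compactness, whereas the paper leaves these details implicit.
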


Assume that there exists some rank $k$ projection matrix $\PA$ such that $f(x) = g(\PA x)$, where $k$ is much smaller than $\dim$. Because Gaussians projected on a $k$-dimensional subspace are still Gaussians, we show that our algorithm has a dimension dependence on $k$. We let $Q_g(\A)$ be the condition number of $g$ restricted to the subspace $\A$ that drives the dominant changes in $f(x)$. 

\begin{theorem}
\label{thm:decay_lowdim}
Let $f(x) = g(\PA x)$ for some unknown rank $k$ matrix $\PA$ with $k < \dim$ and suppose $\frac{3}{5Q}\|\PA(x-x^*)\| \in [C_1, C_2]$ for some numbers $C_1, C_2 \in \R^+$. Then, there exist integers $0 \leq k_1, k_2 < \log\frac{C_2}{C_1}$ such that if $r = 2^{k_1} C_1$ or $r = 2^{-k_2} C_2$, then a random sample $y$ from a Gaussian distribution $\cN(x, \frac{r^2}{k}\mathbf{I})$ satisfies
\[\textstyle f(y) - f(x^*) \leq \paren{f(x) -f(x^*)} \paren{1 - \frac{1}{5kQ_g(\A)}} \]
with constant probability.
\end{theorem}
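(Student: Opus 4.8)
The plan is to reduce Theorem~\ref{thm:decay_lowdim} to the full-dimensional Theorem~\ref{thm:decay_probability} (or rather, its underlying geometry via Lemma~\ref{lem:gaussian-intersection}) by projecting everything onto the rank-$k$ subspace spanned by $\A$. The key observation I would exploit is the one already flagged in the text: since $f(x) = g(\PA x)$ and $\PA = \A\A^\top$, the function value depends only on the projected coordinate $\PA x$, and \emph{any} affine image of a Gaussian is again Gaussian. So if $y \sim \cN(x, \frac{r^2}{k}\mathbf{I})$, then the relevant projected sample $\A^\top y \in \R^k$ is distributed as $\cN(\A^\top x, \frac{r^2}{k}\mathbf{I}_k)$, because $\A^\top \A = \mathbf{I}_k$ by orthonormality. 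This is precisely a Gaussian in $\R^k$ with the variance $\frac{r^2}{k}$ per coordinate that Lemma~\ref{lem:gaussian-intersection} is stated for (with $n$ replaced by $k$).

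First I would set up the reduced problem entirely in $\R^k$. Let $h(u) = g(\A u)$ for $u \in \R^k$; by Definition~\ref{def:condition_number} restricted to $\A$, we may assume $h$ is $\alpha$-strongly convex and $\beta$-smooth with $\beta/\alpha = Q_g(\A)$, and $f(x)$ is a monotone transform of $h(\A^\top x)$. Writing $u = \A^\top x$ and $u^* = \A^\top x^*$, the hypothesis $\frac{3}{5Q}\|\PA(x-x^*)\| \in [C_1,C_2]$ becomes a statement about $\|u - u^*\|$ since $\|\PA(x-x^*)\| = \|\A^\top(x-x^*)\| = \|u-u^*\|$. The goal $f(y) - f(x^*) \leq (f(x)-f(x^*))(1 - \frac{1}{5kQ_g(\A)})$ is, by monotone invariance of level sets, equivalent to a decrease on $h$ evaluated at $\A^\top y$ versus $\A^\top x$. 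So I would restate the entire claim as the $k$-dimensional descent statement for $h$ with a Gaussian sample $\A^\top y \sim \cN(u, \frac{r^2}{k}\mathbf{I}_k)$.

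Next I would invoke the tangent-ball structure. By Lemma~\ref{lem:enclosed_ball} applied to the strongly convex, smooth $h$ in $\R^k$, the target sub-level set in $u$-space contains a ball $B_2$ of radius $r_2 \geq \dc - \frac{\dc}{k}$ tangent to the current level set, where $\dc$ plays the role of the distance governed by $\|u-u^*\|$ and the condition number $Q_g(\A)$; meanwhile the sampling ball $B_1 = \cB(u, \frac{r}{\sqrt{k}})$ has radius in $[\frac{\dc}{2\sqrt{k}}, \frac{\dc}{\sqrt{k}}]$ for an appropriately chosen $r \in \{2^{k_1}C_1, 2^{-k_2}C_2\}$ — this is exactly where the discretization over the two geometric sequences of radii guarantees the existence of suitable integers $k_1, k_2$ with $0 \leq k_1, k_2 < \log\frac{C_2}{C_1}$, mirroring the argument in Theorem~\ref{thm:decay_probability}. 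With these radius bounds verified in dimension $k$, Lemma~\ref{lem:gaussian-intersection} (with $\dim = k$) gives $\prob{\A^\top y \in B_2} > c$ for a dimension-independent constant, which translates back into the claimed objective decrease with constant probability.

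The main obstacle I expect is the careful bookkeeping that the projection preserves exactly the quantities the full-dimensional proof relies on: that $\A^\top y$ has the correct per-coordinate variance $\frac{r^2}{k}$ (needing $\A^\top\A = \mathbf{I}_k$), that the enclosed-ball radius from Lemma~\ref{lem:enclosed_ball} in $\R^k$ meets the $r_2 \geq \dc - \frac{\dc}{k}$ threshold with the factor $\frac{1}{5kQ_g(\A)}$ rather than $\frac{1}{5kQ}$, and that the sampling-ball radius lands in the required window so the discretized choice of $r$ is valid. The subtlety is that the decrease must be expressed in terms of the \emph{projected} gap $\|\PA(x-x^*)\|$ and the projected condition number $Q_g(\A)$, and I would need to confirm that the component of $y-x$ orthogonal to the column space of $\A$ is irrelevant to $f$ (immediate since $f(y) = g(\PA y)$ depends only on $\A^\top y$), so that only the $k$-dimensional Gaussian marginal matters. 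Once these identifications are in place, the result follows by replaying the Theorem~\ref{thm:decay_probability} argument verbatim in $\R^k$.
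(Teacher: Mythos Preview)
Your proposal is correct and follows essentially the same approach as the paper: project onto the $k$-dimensional column space of $\A$, use that the Gaussian sample projects to a Gaussian with per-coordinate variance $r^2/k$, and then rerun the Theorem~\ref{thm:decay_probability} argument in $\R^k$ with Lemma~\ref{lem:gaussian-intersection} replacing Lemma~\ref{lem:ball-intersection}. The paper's proof is a two-sentence sketch of exactly this reduction; your version simply unpacks the bookkeeping (the identification $\|\PA(x-x^*)\| = \|u-u^*\|$, the role of $Q_g(\A)$, and the irrelevance of the orthogonal component) more explicitly.
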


Note that the speed-up in progress is due to the fact that we can now tolerate the larger sampling radius of $\Omega(1/\sqrt{k})$, while maintaining a high probability of making progress. If $k$ is unknown, we can simply use binary search to find the correct radius with an extra factor of $\log(n)$ in our runtime. 
\\ \\
The low-rank assumption is too restrictive to be realistic; however, our fast rates still hold, at least for the early stages of the optimization, even if we assume that $f(x) = g(\PA x ) +  h(x)$ and $|h(x) |\leq \delta$ is a full-rank function that is bounded by $\delta$. In this setting, we can show that convergence remains fast, at least until the optimality gap approaches $\delta$.

\begin{theorem}
\label{thm:decay_lowdim_extended}
Let $f(x) = g(\PA x ) + h(x)$  for some unknown rank $k$ matrix $\PA$ with $k < \dim$ where $g, h$ are convex and $|h|\leq \delta$. Suppose $\frac{3}{5Q}\|\PA x  - z^*\| \in [C_1, C_2]$ for some numbers $C_1, C_2 \in \R^+$ where $z^*$ minimizes $g(z)$. Then, there exist integers $0 \leq k_1, k_2 < \log\frac{C_2}{C_1}$ such that if $r = 2^{k_1} C_1$ or $r = 2^{-k_2} C_2$, then a random sample $y$ from a Gaussian distribution $\cN(x, \frac{r^2}{k}\mathbf{I})$ satisfies
\[\textstyle f(y) - f(x^*) \leq \paren{f(x) -f(x^*)} \paren{1 - \frac{1}{10kQ_g(\A)}} \]
with constant probability whenever $f(x) - f(x^*) \geq 60 \delta k Q_g(\A)$. 
\end{theorem}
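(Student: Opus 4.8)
The plan is to reduce to the low-rank analysis of \thm{decay_lowdim} applied to the convex function $g$ on the range of $\PA$, and then absorb the perturbation $h$ using only the bound $|h| \le \delta$. Write $z = \PA x$, let $F = f(x) - f(x^*)$ be the optimality gap of $f$, and let $G = g(z) - g(z^*)$ be the optimality gap of $g$ at the projected point. The first step is to show these two gaps are close. Since $z^*$ minimizes $g$, any preimage $\tilde x$ with $\PA \tilde x = z^*$ satisfies $f(\tilde x) = g(z^*) + h(\tilde x) \le g(z^*) + \delta$, so minimality of $x^*$ gives $f(x^*) \le g(z^*) + \delta$; conversely $f(x^*) = g(\PA x^*) + h(x^*) \ge g(z^*) - \delta$. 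Hence $|f(x^*) - g(z^*)| \le \delta$, and combined with $|h(x)| \le \delta$ this yields $|F - G| \le 2\delta$. Note this is the reason the hypothesis is anchored at $z^* = \argmin g$ rather than at $\PA x^*$, since the perturbation $h$ can shift the minimizer of $f$ away from that of $g \circ \PA$.

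Next I would invoke the sampling argument behind \thm{decay_lowdim}. Because any affine projection of the isotropic Gaussian $\cN(x, \frac{r^2}{k}\mathbf{I})$ is again Gaussian, the projected sample satisfies $\PA y \sim \cN(z, \frac{r^2}{k}\mathbf{I})$ within the rank-$k$ subspace. The radius condition $\frac{3}{5Q}\|\PA x - z^*\| \in [C_1, C_2]$ together with the binary search over $k_1, k_2$ is exactly what positions the sampling radius against the ball enclosed in the sub-level set $\sublevelset_z(g)$ (Lemma~\ref{lem:enclosed_ball}), so Lemma~\ref{lem:gaussian-intersection} applies in dimension $k$. This gives, with constant probability, the one-step decrease for $g$, namely $g(\PA y) - g(z^*) \le G\paren{1 - \frac{1}{5kQ_g(\A)}}$.

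Finally I would translate this back to $f$ and absorb the factor-of-two degradation in the rate. On the good event,
\[ f(y) - f(x^*) = \paren{g(\PA y) - g(z^*)} + \paren{g(z^*) - f(x^*)} + h(y) \le G\paren{1 - \frac{1}{5kQ_g(\A)}} + 2\delta, \]
using $g(z^*) - f(x^*) \le \delta$ and $h(y) \le \delta$. Substituting $G \le F + 2\delta$ and simplifying produces the bound $F\paren{1 - \frac{1}{5kQ_g(\A)}} + 4\delta$, which is at most $F\paren{1 - \frac{1}{10kQ_g(\A)}}$ precisely when $4\delta \le \frac{F}{10kQ_g(\A)}$, i.e. whenever $F \ge 40\delta k Q_g(\A)$. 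The stated threshold $60\delta k Q_g(\A)$ simply leaves extra slack to cover the looseness in relating $\|\PA x - z^*\|$ to $G$ in the geometric step.

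The main obstacle is the perturbation bookkeeping in the middle step: the Gaussian argument only certifies that $\PA y$ lands in a favorable sub-level set of $g$, and I must ensure that the $\pm\delta$ errors introduced by $h$ at both $y$ and $x^*$, as well as the mismatch between $\argmin f$ and $\argmin g$, do not overwhelm the guaranteed decrease of order $G/(kQ_g(\A))$. This is exactly what the lower bound $f(x) - f(x^*) \ge 60\delta k Q_g(\A)$ enforces, and keeping the constants consistent between the gap comparison $|F-G| \le 2\delta$ and the rate degradation from $\frac{1}{5kQ_g(\A)}$ to $\frac{1}{10kQ_g(\A)}$ is the delicate part of the calculation.
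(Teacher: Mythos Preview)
Your proposal is correct and follows essentially the same route as the paper: invoke the low-rank Gaussian step (Theorem~\ref{thm:decay_lowdim}/Lemma~\ref{lem:gaussian-intersection}) to obtain the $(1-\tfrac{1}{5kQ_g(\A)})$ contraction for $g$, then absorb the $\pm\delta$ errors from $h$ and from $|f(x^*)-g(z^*)|\le\delta$ to degrade the rate to $(1-\tfrac{1}{10kQ_g(\A)})$. Your bookkeeping is in fact a bit tighter---you accrue $+4\delta$ and hence a threshold of $40\delta kQ_g(\A)$, whereas the paper's chain of inequalities picks up $+6\delta$ and lands at $60\delta kQ_g(\A)$---so the stated constant is not ``slack for the geometric step'' but simply the outcome of the paper's slightly looser accounting.
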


\subsection{Lower Bounds}

We show that our upper bounds given in the previous section are tight up to logarithmic factors for any symmetric sampling distribution $\D$. These lower bounds are easily derived from our geometric perspective as we show that a sampling distribution with a large radius gives an extremely low probability of intersection with the desired sub-level set. Therefore, while gradient-approximation algorithms can be accelerated to achieve a runtime that depends on the square-root of the condition number $Q$, gradient-less methods that rely on random sampling are likely unable to be accelerated according to our lower bound. However, we emphasize that monotone invariance allows these results to apply to a broader class of objective functions, beyond smooth and convex, so the results can be useful in practice despite the seemingly worse theoretical bounds. 

\begin{theorem}
\label{thm:lowergaussian}
Let $y = x + v$, where $v$ is a random sample from $r \D$ for some radius $r>0$ and $\D$ is standard Gaussian or any rotationally symmetric distribution. Then, there exist a region $X$ with positive measure such that for any $x \in X$, 
\[\textstyle f(y) - f(x^*) \geq \paren{f(x) -f(x^*)} \paren{1 - \frac{\sqrt{5\log(nQ)}}{nQ}} \]
with probability at least $1 - \frac{1}{\text{poly}(nQ)}$. 
\end{theorem}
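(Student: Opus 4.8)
The plan is to establish the lower bound by exhibiting one hard instance together with a region on which every symmetric sampler is provably slow. Because this section only uses level-set geometry and is invariant under monotone transformations, I would take $f(z)=\frac{1}{2}z^\top H z$ with $x^*=0$, where $H$ has a single small eigenvalue $\alpha$ along $e_1$ and all other eigenvalues equal to $\beta=Q\alpha$, so the condition number is exactly $Q$. The decisive choice is to place $x=t\,e_1$ on the minimal-curvature axis: for a fixed optimality gap $f(x)=\frac{\alpha t^2}{2}$ this maximizes the distance $t=\sqrt{2f(x)/\alpha}$ to the optimum, inflating it by a factor $\sqrt{Q}$ relative to the high-curvature directions, and this inflation is exactly what will deposit the factor $Q$ into the denominator of the final bound.

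Next I would prove a containment statement dual to the inscribed-ball property used for the upper bound. To improve by a factor $\epsilon$, the sample $y=x+v$ must enter the shrunken sublevel set $\sublevelset_{(1-\epsilon)f(x)}$, which is the ellipsoid $\set{z : z^\top H z \le (1-\epsilon)\alpha t^2}$. Its supporting hyperplane at the extreme point $t\sqrt{1-\epsilon}\,e_1$ gives the exact containment $\sublevelset_{(1-\epsilon)f(x)} \subseteq \set{z : z_1 \le t\sqrt{1-\epsilon}}$, so improvement forces $\langle v,e_1\rangle \le -t\paren{1-\sqrt{1-\epsilon}} \le -\tfrac{\epsilon t}{2}$, using $1-\sqrt{1-\epsilon}\ge \epsilon/2$. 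Splitting $f$ into its $e_1$-block and its orthogonal $\beta$-block additionally forces $\norm{v_\perp}\le t/\sqrt{Q}$. Thus the improvement event lives in a thin slab displaced by $\tfrac{\epsilon t}{2}$ toward the optimum, and the whole problem reduces to bounding the probability that $v$ lands in this slab.

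I would then bound that probability by concentration of measure, using only rotational symmetry. Writing $v=\norm{v}\,\omega$ with $\omega$ uniform on the sphere, the displacement condition becomes $\omega_1 \le -\tfrac{\epsilon t}{2\norm{v}}$, so $\prob{\text{improve}} \le \exp\paren{-\tfrac{n-1}{2}\paren{\tfrac{\epsilon t}{2\norm{v}}}^2}$; for the standard Gaussian the cleaner route is that $v_1\sim\mathcal N(0,r^2)$ is an independent marginal, giving exactly $\exp\paren{-\epsilon^2 t^2/(8r^2)}$. The last step is to calibrate the region: choosing $t=\Theta\paren{\norm{v}\,Q\sqrt{n}}$ (for the Gaussian, $t=2rnQ$) makes $\tfrac{\epsilon t}{\norm{v}}$ of order $\tfrac{\sqrt{\log(nQ)}}{\sqrt{n}}$ precisely when $\epsilon=\tfrac{\sqrt{5\log(nQ)}}{nQ}$, so the exponent is $\Theta(\log(nQ))$ and the failure probability is at most $1/\mathrm{poly}(nQ)$. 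Taking $X$ to be a small ball (or narrow cone) around this point gives positive measure, since every estimate is continuous in $x$. Notably this is the same operating point $\norm{v}/t \sim 1/(Q\sqrt{n})$ used by the upper bound in \thm{decay_probability}, which is exactly why the two bounds agree up to the $\sqrt{\log(nQ)}$ factor.

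The step I expect to require the most care is making the probabilistic bound uniform over the sampling law rather than only for the Gaussian. For a fixed-radius symmetric distribution (uniform on a sphere) the spherical-cap estimate applies directly, and for the Gaussian the independent marginal does; but for a general rotationally symmetric $\D$ one must rule out the dangerous intermediate radius $\norm{v}\sim t/\sqrt{Q}$, at which $v$ aligned with $-e_1$ improves by $\Omega(1/\sqrt{Q})\gg\epsilon$ with non-negligible probability. The remedy is to calibrate $t$ well beyond the radial scale of $\D$ and to invoke the transversal constraint $\norm{v_\perp}\le t/\sqrt{Q}$: conditioned on $\norm{v}$, moderate radii are killed by the direction bound, while genuinely large radii force $\omega$ into an exponentially small polar cap $\set{\omega_1^2 \ge 1 - t^2/(Q\norm{v}^2)}$. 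Combining the two regimes across the conditioning on $\norm{v}$ then yields the stated bound for the whole class of symmetric samplers.
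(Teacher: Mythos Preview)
Your approach is correct and uses the same hard instance as the paper---a quadratic with one soft eigenvalue along $e_1$ and the remaining $n-1$ eigenvalues equal to $Q$ times the soft one---together with a region $X$ on the soft axis. The execution, however, differs in two places. First, the paper fixes the region $X=\{x: 0.9\le x_1\le 1,\ |x_i|\le 0.1/(Q\sqrt{n})\ \text{for}\ i>1\}$ \emph{independently of $r$} and expands $f(x+rv)-f(x)$ directly as a quadratic in $v$; it then performs a two-case split on $r$: when $Qrn \ge \Omega(\sqrt{\log(nQ)})$ the term $Qr^2\sum_{i>1}v_i^2$ dominates and any descent forces the Gaussian tail event $x_1 v_1\le -\Omega(\sqrt{\log(nQ)})$, while for smaller $r$ the step is simply too short to move $f$ by more than the stated amount. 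Your halfspace-containment argument instead calibrates $t=\Theta(rnQ)$ so that the single necessary condition $v_1\le -\epsilon t/2$ already requires $v_1/r\le -\Theta(\sqrt{\log(nQ)})$, which collapses the Gaussian case to one Gaussian tail bound with no case split---a cleaner route for fixed $r$. Second, for a general rotationally symmetric $\D$ the paper writes the sample as $R\,v/\|v\|$ with $v$ standard Gaussian and $R$ independent, conditions on $R$, and replays exactly the same two cases; because its region $X$ does not depend on the sampling radius, this extension is immediate. Your polar-cap/transversal plan for general $\D$ arrives at the same conclusion but, as you correctly anticipated, one must check that the direction bound and the transversal bound together cover \emph{every} conditioned radius; the paper sidesteps that bookkeeping by reducing wholesale to the Gaussian analysis it has already completed.
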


\section{Gradientless Algorithms}
\label{sec:algorithms}

In this section, we present two algorithms that follow the same Gradientless Descent (GLD) template: GLD-Search and GLD-Fast, with the latter being an optimized version of the former when an upper bound on the condition number of a function is known. For both algorithms, since they are monotone-invariant, we appeal to the previous section to derive fast convergence rates for any monotone transform of convex $f(x)$ with good condition number. We show the efficacy of both algorithms experimentally in the Experiments section.

\subsection{Gradientless Descent with Binary Search}

Although the sampling distribution $\D$ is fixed, we have a choice of radii for each iteration of the algorithm. We can apply a binary search procedure to ensure progress. The most straightforward version of our algorithm is thus with a naive binary sweep across an interval in $[r, R]$ that is unchanged throughout the algorithm. This allows us to give convergence guarantees without previous knowledge of the condition number at a cost of an extra factor of $\log(n/\epsilon)$.

\begin{algorithm}[t]
\SetKwBlock{Ballsample}{Ball Sampling:}{end}
\SetKwInOut{Input}{Input}
\caption{Gradientless Descent with Binary Search (GLD-Search) \label{alg:naivebinary}}
  \Input{function: $\obj: \R^\dim \to \R$, $T \in \Z_+$: number of iterations, $x_0$: starting point, \\
  $\D$: sampling distribution, $R$: maximum search radius, $r$: minimum search radius \\}
  Set $K =  \log(R/r)$ \\
  \For{$t = 0, \dots, T$}{
  \textbf{Ball Sampling Trial:}\\
  \For{k = 0, \ldots, K}{
    Set $r_{k} = 2^{-k}R $.\\
    Sample $v_{k} \sim r_{k}\mathcal{D}$. \\
  }
  \textbf{Update:}
    $x_{t+1} = \arg \underset{k}{\min} \set{ f(y) \Big | \, y = x_t, \, y = x_t + v_{k} }$ \\
     } 
\Return{$x_T$}
\end{algorithm}

\begin{theorem}
\label{thm:convergence_gradientless}
Let $x_0$ be any starting point and $f$ a blackbox function with condition number $Q$. Running Algorithm~\ref{alg:naivebinary} with $r = \frac{\epsilon}{\sqrt{\dim}}, R = \|\X\|$ and $\D = \cN(0,\mathbf{I})$ as a standard Gaussian returns a point $x_T$ such that $\|x_T -x^*\| \leq 2Q^{3/2}\epsilon$ after $O(nQ \log( n\|\X\|/\epsilon)^2)$ function evaluations with high probability.

Furthermore, if $f(x) = g(\PA x)$ admits a low-rank structure with $\PA$ a rank $k$ matrix, then we only require $O(kQ_g(\A) \log(n\|\X\|/\epsilon)^2)$ function evaluations to guarantee $\|\PA (x_T -x^*)\| \leq \epsilon$. This holds analogously even if $f(x) = g(\PA x) + h(x)$ is almost low-rank where $|h| \leq \delta$ and $\epsilon > 60 \delta k Q_g(\A)$. 
\end{theorem}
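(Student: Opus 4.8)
The plan is to combine the single-step progress guarantee of Theorem~\ref{thm:decay_probability} (or its Gaussian form, Lemma~\ref{lem:gaussian-intersection}) with a potential and concentration argument over the iterations, and then translate the resulting function-value bound into the stated distance bound. First I would observe that the inner loop of Algorithm~\ref{alg:naivebinary} samples at every scale $r_k = 2^{-k}R$ for $k=0,\dots,K$, so the swept radii form a geometric grid covering $[r,R]$ up to factors of two. Taking $C_1 = r = \epsilon/\sqrt{\dim}$ and $C_2 = R = \diam{\X}$, Theorem~\ref{thm:decay_probability} guarantees that as long as $\frac{3}{5Q}\xd{x_t} \in [C_1,C_2]$, one of the grid radii realizes the ``good'' radius $2^{k_1}C_1$ or $2^{-k_2}C_2$, so the corresponding sample satisfies $f(y)-f(x^*) \le \paren{f(x_t)-f(x^*)}\paren{1-\frac{1}{5\dim Q}}$ with probability at least $\frac14$. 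Since the update takes the minimizer over all sampled points (and over staying put), the process is monotone and each iteration succeeds with probability at least $\frac14$ whenever $x_t$ is still above the floor.

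Next I would run a potential argument on $\pot_t = f(x_t)-f(x^*)$. By $\beta$-smoothness $\pot_0 \le \frac{\beta}{2}R^2$, a success multiplies $\pot$ by $\paren{1-\frac{1}{5\dim Q}}$, and the progress condition fails only once $\xd{x_t}$ has shrunk to order $Q\epsilon$, i.e. once $\pot_t$ reaches a floor of order $\frac{\beta}{2}(Q\epsilon)^2$. Hence $N = O\paren{\dim Q \log\paren{\diam{\X}/\epsilon}}$ successful steps suffice. Because the per-iteration success probability is at least $\frac14$ independently of the history (conditioned on being above the floor), a Chernoff bound shows $T = O(N) = O\paren{\dim Q \log\paren{\dim\diam{\X}/\epsilon}}$ iterations produce at least $N$ successes with high probability. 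Each iteration spends $K+1 = \log(R/r)+1 = O\paren{\log\paren{\dim\diam{\X}/\epsilon}}$ evaluations, giving the total $O\paren{\dim Q \log\paren{\dim\diam{\X}/\epsilon}^2}$.

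To obtain the distance guarantee I would translate the final function-value bound back to distance, where two separate factors of the condition number appear. The progress condition of Theorem~\ref{thm:decay_probability} carries a factor $\frac{1}{Q}$ (through $\frac{3}{5Q}\xd{x_t}$), so the smallest grid radius $r=\epsilon/\sqrt{\dim}$ ceases to be a valid good radius only once $\xd{x_t}$ has shrunk to order $Q\epsilon$; this is the floor of guaranteed progress. At that floor the gap is at most $\frac{\beta}{2}(Q\epsilon)^2$ by $\beta$-smoothness, and feeding this back through $\alpha$-strong convexity gives $\xd{x_T} \le \sqrt{2\pot_T/\alpha} \le \sqrt{\beta/\alpha}\cdot Q\epsilon = \sqrt{Q}\cdot Q\epsilon = Q^{3/2}\epsilon$, so after absorbing constants $\xd{x_T}\le 2Q^{3/2}\epsilon$. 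The main obstacle is precisely this round trip: Theorem~\ref{thm:decay_probability} controls only the function gap whereas the conclusion is about distance, so one must reconcile the sampling-scale conventions, pin down the floor distance, and verify that the smoothness and strong-convexity conversions compound to exactly $Q^{3/2}$ rather than a worse power, all while confirming that the good radius stays inside $[r,R]$ along the entire trajectory.

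Finally, the low-rank and almost-low-rank claims follow by swapping the single-step lemma. For $f(x)=g(\PA x)$ I would replace Theorem~\ref{thm:decay_probability} with Theorem~\ref{thm:decay_lowdim}: the same scale-free grid still contains a good radius (no knowledge of $k$ is needed, since we sweep all scales), but each success now multiplies the gap by $\paren{1-\frac{1}{5kQ_g(\A)}}$, replacing $\dim Q$ by $kQ_g(\A)$ in the success count while leaving the per-iteration sweep cost $O\paren{\log\paren{\dim\diam{\X}/\epsilon}}$ unchanged; this yields $O\paren{kQ_g(\A)\log\paren{\dim\diam{\X}/\epsilon}^2}$ evaluations together with a guarantee on $\|\PA(x_T-x^*)\|$. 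For $f(x)=g(\PA x)+h(x)$ with $|h|\le\delta$ I would instead invoke Theorem~\ref{thm:decay_lowdim_extended}, whose per-step decrease $\paren{1-\frac{1}{10kQ_g(\A)}}$ holds only while $f(x)-f(x^*)\ge 60\delta k Q_g(\A)$; the hypothesis $\epsilon > 60\delta k Q_g(\A)$ ensures this regime persists until the target accuracy is reached, so the identical counting argument goes through verbatim.
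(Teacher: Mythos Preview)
Your proposal is correct and follows essentially the same approach as the paper: invoke the single-step Gaussian progress guarantee while the binary-search grid covers the ``good'' radius, run a multiplicative potential argument on $f(x_t)-f(x^*)$ until the floor $\xd{x_t}=O(Q\epsilon)$ is reached, convert back to distance via smoothness and strong convexity to pick up the $Q^{3/2}$ factor, and multiply by the $O(\log(n\diam{\X}/\epsilon))$ per-iteration sweep cost. You are somewhat more explicit than the paper about the Chernoff step and about pinning down the floor, but the structure and all the key ingredients match.
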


\subsection{Gradientless Descent with Fast Binary Search}

GLD-Search (Algorithm~\ref{alg:naivebinary}) uses a naive lower and upper bound for the search radius $\|x_t - x^*\|$, which incurs an extra factor of $\log(1/\epsilon)$ in the runtime bound. In GLD-Fast, we remove this extra factor dependence on $\log(1/\epsilon)$ by drastically reducing the range of the binary search. This is done by exploiting the assumption that $f$ has a good condition number upper bound $\hat{Q}$ and by slowly halfing the diameter of the search space every few iterations since we expect $x_t \to x^*$ as $t \to \infty$.

\begin{algorithm}[t]
\SetKwBlock{Ballsample}{Ball Sampling:}{end}
\SetKwInOut{Input}{Input}
\caption{Gradientless Descent with Fast Binary Search (GLD-Fast) \label{alg:fastbinary}}
  \Input{function $\obj: \R^\dim \to \R$, $T \in \Z_+$: number of iterations, $x_0$: starting point, \\
  $\D$: sampling distribution, $R$: diameter of search space, $Q$: condition number bound \\}
  Set $K =  \log(4\sqrt{Q})$, $H = nQ\log(Q)$ \\
  \For{$t = 1, \dots, T$}{
  Set $R = R/2$ when $t \equiv 0 \mod H$ (every $H$ iterations). \\
  \textbf{Ball Sampling Trial:}\\
  \For{k = -K, \ldots, 0, \ldots, K}{
    Set $r_{k} = 2^{-k}R $. \\
    Sample $v_{k} \sim r_{k}\mathcal{D}$. \\
  }
  \textbf{Update:}
    $x_{t+1} = \arg \underset{k}{\min} \set{ f(y) \Big | \, y = x_t, \, y = x_t + v_k }$ \\
     } 
\Return{$x_T$}
\end{algorithm}

\begin{theorem}
\label{thm:convergence_gradientless_fast}
Let $x_0$ be any starting point and $f$ a blackbox function with condition number upper bounded by $Q$. Running Algorithm~\ref{alg:fastbinary} with suitable parameters returns a point $x_T$ such that $f(x_T) -f(x^*) \leq \epsilon$ after $O(nQ\log^2(Q) \log(\|\X\|/\epsilon))$ function evaluations with high probability.

Furthermore, if $f(x) = g(\PA x)$ admits a low-rank structure with $\PA$ a rank $k$ matrix, then we only require $O(kQ_g(\A) \log(n)\log^2(Q_g(\A)) \log( \|\X\|/\epsilon))$ function evaluations to guarantee $\|\PA (x_T -x^*)\| \leq \epsilon$. This holds analogously even if $f(x) = g(\PA x) + h(x)$ is almost low-rank where $|h| \leq \delta$ and $\epsilon > 60 \delta k Q_g(\A)$.
\end{theorem}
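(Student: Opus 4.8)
The plan is to reduce the global guarantee to the single-step engine of \thm{decay_probability}, organized around the epochs induced by the halving of $R$ in Algorithm~\ref{alg:fastbinary}. Call the $H = nQ\log(Q)$ consecutive iterations during which $R$ is held fixed an \emph{epoch}. Since the algorithm keeps the best iterate, the optimality gap $f(x_t)-f(x^*)$ is monotonically non-increasing, and by $\alpha$-strong convexity and $\beta$-smoothness it controls $\xd{x_t}$ up to a factor of $\sqrt{Q}$. The first step is to verify that, within an epoch, the dyadic grid $\set{2^{-k}R : -K \le k \le K}$ with $K = \log(4\sqrt{Q})$ always contains a radius meeting the hypothesis of \thm{decay_probability}. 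The scale required there is the fixed multiple $\tfrac{3}{5Q}\xd{x_t}$, so the grid must be kept matched to $\xd{x_t}$; the $2K+1 = \Theta(\log Q)$ levels span a multiplicative range of $\Theta(Q)$, which is exactly what is needed to absorb the $\sqrt{Q}$ slack between the gap and $\xd{x_t}$ (and the residual mismatch from tracking $R$ by a deterministic schedule). As long as this matching holds, each iteration is a \emph{success} — decreasing the gap by the factor $1-\tfrac{1}{5nQ}$ — with probability at least $\tfrac14$, using fresh sampling randomness.

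Next I would convert the per-step success probability into per-epoch progress by concentration. The number of successes in an epoch stochastically dominates a sum of $H$ independent $\mathrm{Bernoulli}(\tfrac14)$ variables, so a Chernoff bound yields $\Omega(nQ\log Q)$ successes with probability $1-e^{-\Omega(nQ\log Q)}$. Because the analysis can only certify $\xd{x_t}$ through $R$, it must provision for the worst case allowed by the invariant: driving the gap down by a $\Theta(Q)$ factor to guarantee one halving of the matched scale. Since each success multiplies the gap by $1-\tfrac{1}{5nQ}$, certifying a $\Theta(Q)$ reduction requires $\Theta(nQ\log Q)$ successes, which is precisely what fixes $H = \Theta(nQ\log Q)$ and supplies the second logarithmic factor (the first coming from the $\Theta(\log Q)$ evaluations per iteration). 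A union bound over the $E = O(\log(\|\X\|/\epsilon))$ epochs — one halving of $R$ from $\|\X\|$ down to the $\epsilon$-scale, plus a lower-order initial descent during which $R$ first enters the matched window — keeps the total failure probability negligible. The evaluation count is then $E \cdot H \cdot (2K+1) = O(\log(\|\X\|/\epsilon)) \cdot \Theta(nQ\log Q) \cdot \Theta(\log Q) = O(nQ\log^2 Q\,\log(\|\X\|/\epsilon))$, as claimed.

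For the low-rank statement $f = g(\PA x)$, I would swap the engine for \thm{decay_lowdim}: since an affine image of the isotropic Gaussian $\cN(x,\tfrac{r^2}{k}\mathbf{I})$ is again Gaussian on the range of $\A$, the ball-intersection argument runs verbatim in the $k$-dimensional subspace, giving a constant-probability success that multiplies the gap by $1-\tfrac{1}{5kQ_g(\A)}$. Substituting $n \mapsto k$ and $Q \mapsto Q_g(\A)$ throughout gives epoch length $H = \Theta(kQ_g(\A)\log Q_g(\A))$ and final count $O(kQ_g(\A)\log^2 Q_g(\A)\,\log(\|\X\|/\epsilon))$; the extra $\log n$ appears because $k$ (hence the correct absolute scale of the grid) is unknown, so the binary search must sweep a $\mathrm{poly}(n)$-wider range of radii, as noted after \thm{decay_lowdim}. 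For the almost-low-rank case $f = g(\PA x) + h(x)$ with $|h| \le \delta$, I would use \thm{decay_lowdim_extended} as the engine unchanged: it yields the same style of decrease (now by $1-\tfrac{1}{10kQ_g(\A)}$) \emph{as long as} $f(x)-f(x^*) \ge 60\delta k Q_g(\A)$, so the epoch/halving argument proceeds identically until the gap reaches this floor, which is exactly the regime $\epsilon > 60\delta k Q_g(\A)$ in the statement.

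The main obstacle I anticipate is not any single inequality but the bookkeeping that couples the deterministic halving of $R$ to the random decrease of $\xd{x_t}$: one must show the fixed schedule keeps $R$ inside the $\Theta(Q)$-wide window in which the grid brackets the scale $\tfrac{3}{5Q}\xd{x_t}$, simultaneously across all epochs and including the initial descent, and that the $\sqrt{Q}$ gap-versus-distance slack is handled with the right constants so that $H$ iterations genuinely certify one halving. If the direct invariant proves fragile, a cleaner route is to invoke the already-established GLD-Search guarantee (\thm{convergence_gradientless}) as a black box and argue only that shrinking the search range from $[\,\epsilon/\sqrt{\dim},\,\|\X\|\,]$ to the $\Theta(\sqrt{Q})$-width window removes the spurious $\log(1/\epsilon)$ factor, trading it for the $\Theta(\log Q)$ per-iteration cost.
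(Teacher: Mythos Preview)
Your plan is correct and mirrors the paper's proof: epoch structure induced by halving $R$, invoke \thm{decay_probability} per iteration, argue $\Theta(nQ\log Q)$ successes force a $\Theta(Q)$ drop in the gap so that $\xd{x_t}$ halves per epoch, multiply out epochs $\times$ $H$ $\times$ $(2K{+}1)$, and swap in \thm{decay_lowdim}/\ref{thm:decay_lowdim_extended} for the low-rank claims with the extra $\log n$ from the widened sweep. The obstacle you flag is resolved in the paper not by tracking a window invariant directly but by a short trichotomy inside an induction on epochs: with $\xd{x_{i_s}}\le R$ at the epoch start, either $\xd{x_i}\in[R/(4\sqrt{Q}),\,4\sqrt{Q}R]$ for all $i$ in the epoch (so the sweep always brackets the target scale and your concentration argument runs), or some $\xd{x_i}$ exceeds $4\sqrt{Q}R$ (impossible, since by strong convexity this forces $f(x_i)-f(x^*)\ge 2\beta R^2$, contradicting the monotone gap and the smoothness bound $f(x_{i_s})-f(x^*)\le\beta R^2$), or some $\xd{x_i}$ drops below $R/(4\sqrt{Q})$ (then smoothness already gives $f(x_i)-f(x^*)\le\tfrac{\alpha}{4}R^2$ and the epoch goal is met without further successes); in every case $\xd{x_{i_s+H}}\le R/2$, closing the induction cleanly and obviating your fallback to \thm{convergence_gradientless}.
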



\section{Experiments}

We tested GLD algorithms on a simple class of objective functions and compare it to Accelerated Random Search (ARS) by \cite{nesterov2011random}, which has linear convergence guarantees on strongly convex and strongly smooth functions. To our knowledge, ARS makes the weakest assumption among the zeroth-order algorithms that have linear convergence guarantees and perform only a constant order of operations per iteration. Our main conclusion is that GLD-Fast is comparable to ARS and tends to achieve a reasonably low error much faster than ARS in high dimensions ($\geq 50$). In low dimensions, GLD-Search is competitive with GLD-Fast and ARS though it requires no information about the function.
\\ \\
We let $H_{\alpha, \beta, \dim} \in \R^{\dim \times \dim}$ be a diagonal matrix with its $i$-th diagonal equal to $\alpha + (\beta - \alpha) \frac{i -1}{n - 1}$. In simple words, its diagonal elements form an evenly space sequence of numbers from $\alpha$ to $\beta$. Our objective function is then $f_{\alpha, \beta, \dim}: \R^{\dim} \to \R$ as $\textstyle f_{\alpha, \beta, \dim}(x) = \frac{1}{2}x^\top H_{\alpha, \beta, \dim} x$, which is $\alpha$-strongly convex and $\beta$-strongly smooth. We always use the same starting point $x = \frac{1}{\sqrt{\dim}}(1,\ldots,1)$, which requires $\diam{X} = \sqrt{Q}$ for our algorithms. We plot the optimality gap $f(b_t) - f(x^*)$ against the number of function evaluations, where $b_{t}$ is the best point observed so far after $t$ evaluations. Although all tested algorithms are stochastic, they have a low variance on the objective functions that we use; hence we average the results over 10 runs and omit the error bars in the plots.
\\ \\
We ran experiments on $f_{1,8,\dim}$ with imperfect curvature information $\hat{\alpha}$ and $\hat{\beta}$ (see Figure~\ref{fig:badq} in appendix). GLD-Search is independent of the condition number. GLD-Fast takes only one parameter, which is the upper bound on the condition number; if approximation factor is $z$, then we pass $8z$ as the upper bound. 
ARS requires both strong convexity and smoothness parameters. We test three different distributions of the approximation error; when the approximation factor is $z$, then ARS-alpha gets $(\alpha / z, \beta)$, ARS-beta gets ($\alpha, z\beta$), and ARS-even gets $(\alpha / \sqrt{z}, \sqrt{z}\beta)$ as input. GLD-Fast is more robust and faster than ARS when the condition number is over-approximated. When the condition number is underestimated, GLD-Fast still steadily converges.

\subsection{Monotone Transformations}

\begin{figure*}[h]
  \includegraphics[width=\textwidth]{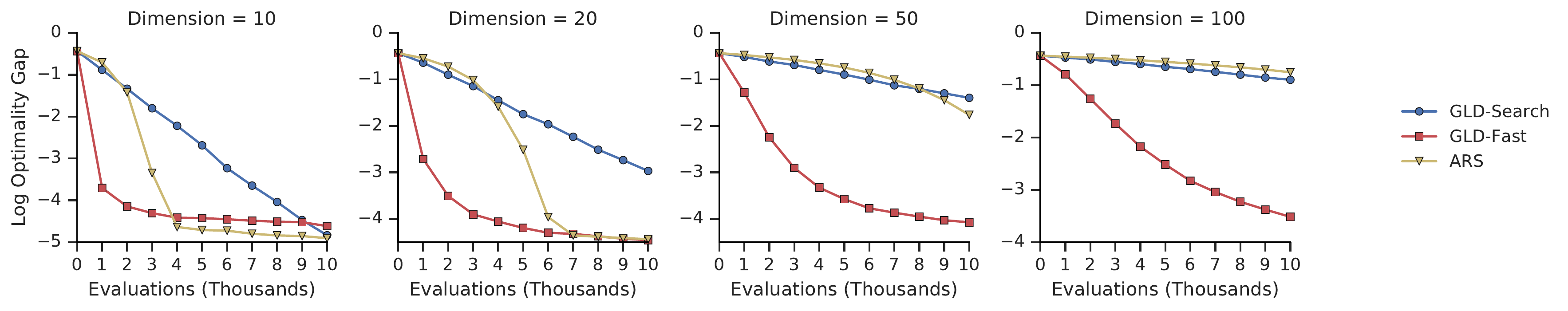}
  \includegraphics[width=\textwidth]{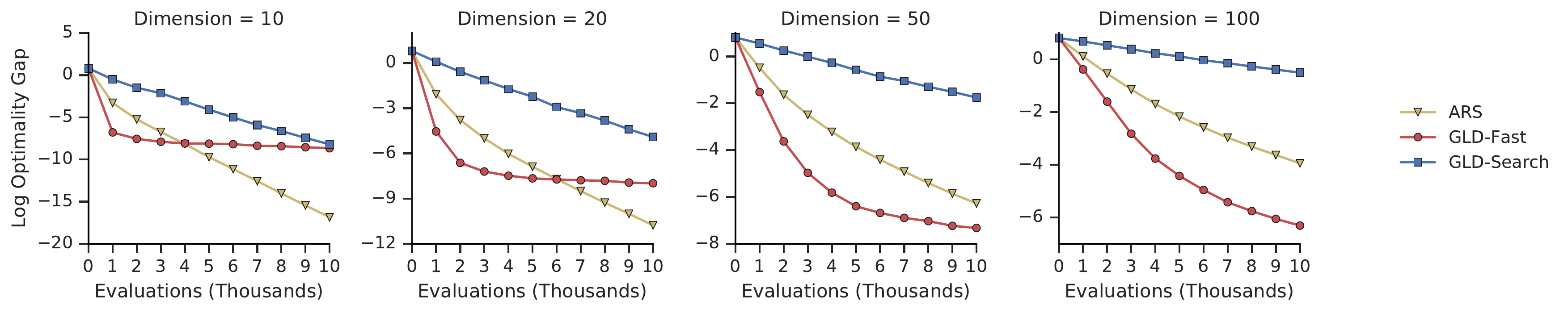}
\caption{The average optimality gap on a quadratic objective function that is strongly convex and smooth objective (top); and its monotone transformation (bottom). Further experiments on non-convex BBOB functions show similar behavior and are in the appendix.\label{fig:by_dim}}
\end{figure*}

In Figure~\ref{fig:by_dim}, we ran experiments on $f_{1,8,\dim}$ for different settings of dimensionality $\dim$, and its monotone transformation with $g(y) = -\exp(-\sqrt{y})$.  For this experiment, we assume a perfect oracle for the strong convexity and smoothness parameters of $f$. The convergence of GLD is totally unaffected by the monotone transformation.
For the low-dimension cases of a transformed function (bottom half of the figure), we note that there are inflection points in the convergence curve of ARS. This means that ARS initially struggles to gain momentum and then struggles to stop the momentum when it gets close to the optimum. Another observation is that unlike ARS that needs to build up momentum, GLD-Fast starts from a large radius and therefore achieves a reasonably low error much faster than ARS, especially in higher dimensions.

\subsection{BBOB Benchmarks}

To show that practicality of GLD on practical and non-convex settings, we also test GLD algorithms on a variety of BlackBox Optimization Benchmarking (BBOB) functions \cite{hansen2009bbob}. For each function, the optima is known and we use the log optimality gap as a measure of competance. Because each function can exhibit varying forms of non-smoothness and convexity, all algorithms are ran with a smoothness constant of 10 and a strong convexity constant of 0.1. All other setup details are same as before, such as using a fixed starting point.
\\ \\
The plots, given in Appendix \ref{sec:appendix_figures}, underscore the superior performance of GLD algorithms on various BBOB functions, demonstrating that GLD can successfully optimize a diverse set of functions even without explicit knowledge of condition number. We note that BBOB functions are far from convex and smooth, many exhibiting high conditioning, multi-modal valleys, and weak global structure. Due to our radius search produce, our algorithm appears more robust to non-ideal settings with non-convexity and ill conditioning. As expected, we note that GLD-Fast tend to outperform GLD-Search, especially as the dimension increases, matching our theoretical understanding of GLD.

\subsection{Mujoco Control Benchmarks and Affine Transformations}

We also ran experiments on the Mujoco benchmarks with varying architectures, both linear and nonlinear. This demonstrates the viability of our approach even in the non-convex, high dimensional setting. We note that however, unlike e.g. ES which uses all queries to form a gradient direction, our algorithm removes queries which produce less reward than using the current arg-max, which can be an information handicap. Nevertheless, we see that our algorithm still achieves competitive performance on the maximum reward. We used a horizon of $1000$ for all experiments.

\begin{table}[h]
\caption{Final rewards by GLD with linear (L) and deep (H41) policies on Mujoco Benchmarks show that GLD is competitive. We apply an affine projection on HalfCheetah to test affine invariance. We use the reward threshold found from \cite{mania2018simple} with Reacher's threshold \cite{ppo} for a reasonable baseline.}
\label{mujoco-table}
\begin{center}
    \begin{tabular}{ | l | l | l | l | l | l |}
    \hline
    Env. & Arch & Rew. at ($10^{4}$, $10^{5}$, Max) Queries & Rew. Thresh. \\ \hline \hline
    HalfCheetah-v1 & L & 3799, 3903, 4064 & 3430 \\ \hline
    HalfCheetah-v1, Proj-200 & L & 1594 , 3509 , 4342 & 3430 \\ \hline
    HalfCheetah-v1 & H41 & 2741, 3074, 3392 &  3430\\ \hline
    Hopper-v1 & L  & 1017, 3359, 3375 & 3120 \\ \hline
    Hopper-v1 & H41  & 2708, 3370, 3566  & 3120 \\ \hline
    Reacher-v1 & L  & -70, -5, -4 & -10 \\ \hline
    Reacher-v1 & H41  & -231, -17, -15 & -10\\ \hline
    Swimmer-v1 & L & 365, 369 , 369 & 325 \\ \hline
    Swimmer-v1 & H41  & 353, 369, 369 & 325\\ \hline
    Walker2d-v1 & L & 1027 , 2201, 2201 & 4390\\ \hline
    Walker2d-v1 & H41 & 1630, 1963, 2146 & 4390 \\ \hline

    \end{tabular}
\end{center}

\end{table}

We further tested the affine invariance of GLD on the policy parameters from using Gaussian ball sampling, under the HalfCheetah benchmark by projecting the state $s$ of the MDP with linear policy to a higher dimensional state $Ws$, using a matrix multiplication with an orthonormal $W$. Specifically, in this setting, for a linear policy parametrized by matrix $K$, the objective function is thus $J(KW)$ where $\pi_{K}(Ws) = KWs$. Note that when projecting into a high dimension, there is a slowdown factor of $\log \frac{d_{new}}{d_{old}}$ where $d_{new}, d_{old}$ are the new high dimension and previous base dimension, respectively, due to the binary search in our algorithm on a higher dimensional space. For our HalfCheetah case, we projected the 17 base dimension to a 200-length dimension, which suggests that the slowdown factor is a factor $\log \frac{200}{17} \approx 3.5$. This can be shown in our plots in the appendix (Figure \ref{fig:mujoco_plot}).

\section{Conclusion}
We introduced GLD, a robust zeroth-order optimization algorithm that is simple, efficient, and we show strong theoretical convergence bounds via our novel geometric analysis. As demonstrated by our experiments on BBOB and MuJoCo benchmarks, GLD performs very robustly even in the non-convex setting and its monotone and affine invariance properties give theoretical insight on its practical efficiency.
\\ \\
GLD is very flexible and allows easy modifications. For example, it could use momentum terms to keep moving in the same direction that improved the objective, or sample from adaptively chosen ellipsoids similarly to adaptive gradient methods. \cite{duchi2011adaptive,mcmahanS10}. Just as one may decay or adaptively vary learning rates for gradient descent, one might use a similar change the distribution from which the ball-sampling radii are chosen, perhaps shrinking the minimum radius as the algorithm progresses, or concentrating more probability mass on smaller radii.
\\ \\
Likewise, GLD could be combined with random restarts or other restart
policies developed for gradient descent.  Analogously to adaptive
per--coordinate learning rates~\cite{duchi2011adaptive,mcmahanS10}, one could
adaptively change the shape of the balls being sampled into ellipsoids with
various length-scale factors.  Arbitrary combinations of the above variants are also possible.

\newpage

\bibliographystyle{alpha}
\bibliography{paper}

\newpage
\appendix

\section{Proofs of Section~\ref{sec:analysis}}

\begin{lemma}
\label{lem:enclosed_ball}
If $h$ has condition number $Q$, then for all $x \in \mathcal{X}$, there is a ball of radius $Q^{-1}\|x - x^*\|$ that is tangent at $x$ and inside the sublevel set $\sublevelset_x(h)$.
\end{lemma}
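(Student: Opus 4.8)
The plan is to exhibit the ball explicitly and verify containment via the smoothness inequality. Since $h$ has condition number $Q$, I may pass to a witnessing $g$ that is $\alpha$-strongly convex and $\beta$-smooth with $\beta/\alpha = Q$ and of which $h$ is a monotone transform; as monotone transforms preserve level and sublevel sets, it suffices to prove the claim for $g$, so I relabel and simply assume $h$ itself is $\alpha$-strongly convex and $\beta$-smooth with $Q = \beta/\alpha$. If $x = x^*$ the statement is vacuous (radius zero), so assume $x \neq x^*$ and set $u = \grad h(x)/\norm{\grad h(x)}$. The natural candidate is the ball $\cB(c,r)$ of radius $r = Q^{-1}\xd{x}$ whose center $c = x - r\,u$ is pushed off $x$ in the descent direction $-\grad h(x)$. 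By construction $\norm{x-c} = r$, so $x$ lies on the boundary sphere, and the outward normal of the sphere at $x$ is $u$, which is parallel to $\grad h(x)$, the normal of the level set $\levelset_x$; this is precisely what tangency at $x$ means.

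Next I would record two gradient-norm estimates. Using $\grad h(x^*) = 0$ together with strong convexity gives $\langle \grad h(x), x - x^*\rangle \ge \alpha\,\xd{x}^2$, whence by Cauchy--Schwarz $\norm{\grad h(x)} \ge \alpha\,\xd{x}$; smoothness analogously yields $\norm{\grad h(x)} \le \beta\,\xd{x}$. The lower bound is the crucial one: combined with the choice $r = (\alpha/\beta)\xd{x}$ it gives $\beta r = \alpha\,\xd{x} \le \norm{\grad h(x)}$, that is, $\beta r \le \norm{\grad h(x)}$.

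Finally, to prove $\cB(c,r) \subseteq \sublevelset_x(h)$, I take an arbitrary $y \in \cB(c,r)$ and write $y - x = -r\,u + w$ with $\norm{w} \le r$. Applying the $\beta$-smoothness upper bound $h(y) \le h(x) + \langle \grad h(x), y-x\rangle + \frac{\beta}{2}\norm{y-x}^2$ and expanding the inner product and squared norm in terms of $s := \langle u, w\rangle$, the estimate for $h(y) - h(x)$ collapses to $-r\norm{\grad h(x)} + \frac{\beta}{2}r^2 + \frac{\beta}{2}\norm{w}^2 + s\paren{\norm{\grad h(x)} - \beta r}$, whose $s$-coefficient $\norm{\grad h(x)} - \beta r$ is nonnegative by the previous paragraph. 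Bounding $s \le \norm{w} \le r$ and $\norm{w}^2 \le r^2$ then makes every term cancel, giving $h(y) \le h(x)$, as desired. The main obstacle is calibrating the center placement and radius so that this cancellation closes exactly to $\le 0$: it is precisely the bound $\beta r \le \norm{\grad h(x)}$ — and hence the specific radius $r = Q^{-1}\xd{x}$ rather than any larger value — that keeps the worst-case direction $w = r\,u$ inside the sublevel set.
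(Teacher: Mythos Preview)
Your proof is correct and follows essentially the same strategy as the paper's: reduce to an $\alpha$-strongly convex, $\beta$-smooth witness, place the ball along the direction $-\nabla h(x)$, use smoothness to show containment in the sublevel set, and use the strong-convexity bound $\norm{\nabla h(x)}\ge \alpha\,\xd{x}$ to get the radius $Q^{-1}\xd{x}$. The only cosmetic difference is that the paper takes the slightly larger ball $\cB\!\paren{x-\tfrac{1}{\beta}\nabla h(x),\,\tfrac{1}{\beta}\norm{\nabla h(x)}}$, for which the cross terms in the smoothness expansion cancel identically (no case analysis on the sign of your $s$-coefficient is needed), and then observes its radius is at least $Q^{-1}\xd{x}$; your ball of radius exactly $Q^{-1}\xd{x}$ sits inside this one, sharing the tangent point $x$.
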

\begin{proof}
Write $h = g \circ f$ such that $f$ is $\alpha$-strongly convex and $\beta$-smooth for some $\beta = Q\alpha$ and $g$ is  monotonically increasing.
From the smoothness assumption, we have for any $s$,
\begin{align*}
  &\textstyle f\left( x - \frac{1}{\beta} \nabla f(x) + s \right) \\
  &\textstyle \leq f(x) + \left\langle\nabla f(x), s - \frac{1}{\beta}\nabla f(x) \right\rangle + \frac{\beta}{2} \left\|s - \frac{1}{\beta} \nabla f(x) \right\|^2 \\
  &\textstyle = f(x) + \frac{\beta}{2} \left( \|s\|^2 - \frac{1}{\beta^2} \|\nabla f(x)\|^2 \right).
\end{align*}
Consider the ball $B = \mathcal{B}(x - \frac{1}{\beta}\nabla f(x),\: \frac{1}{\beta} \|\nabla f(x)\|)$. For any $y \in B$, the above inequality implies $f(y) \leq f(x)$. Hence, when we apply $g$ on both sides, we still have $h(y) \leq h(x)$ for all $y \in B$. Therefore, $B \subseteq \sublevelset_{h(y)}$.

 By strong convexity, $\|\nabla f(x)\| \geq \alpha \|x - x^*\|$. It follows that the radius of $B$ is at least $\frac{\alpha}{\beta} \|x - x^*\|$.
\end{proof}

\begin{proof}[Proof of Lemma~\ref{lem:ball-intersection}]
Without loss of generality, consider the unit distance case where $\ell = 1$. Furthermore, it suffices to prove for the smallest possible radius $r_2 = 1 - \frac{1}{4 \dim}$.

Since $|r_1 - r_2| \leq \dc \leq r_1 + r_2$, the intersection $B_1 \intersect B_2$ is composed of two hyperspherical caps glued end to end.
We lower bound $\vol{B_1 \intersect B_2}$ by the volume of the cap $C_1$ of $B_1$ that is contained in the intersection.
Consider the triangle with sides $r_1, r_2$ and $\dc$.
From classic geometry, the height of $C_1$ is
  \begin{equation} \label{eq:cap-height}
    \textstyle c_1 = \frac{1}{2}\paren{1 + r_1^2 - r_2^2}
     > 0.
  \end{equation}
The volume of a spherical cap is \cite{li2011concise},
\[\vol{C_1} = \frac{1}{2} \vol{B_1} I_{1 - \frac{c_1^2}{r_1^2}}(\frac{n+1}{2}, \frac{1}{2}).\]
where $I$ is the regularized incomplete beta function defined as
\[I_x(a,b) = \frac{\int_{0}^{x} t^{a-1} (1 - t)^{b-1} \ dt}{\int_{0}^{1} t^{a - 1} (1 - t)^{b-1} \ dt} \]
where $x \in [0,1]$ and $a,b \in (0, \infty)$.
Note that for any fixed $a$ and $b$, $I_x(a,b)$ is increasing in $x$.
Hence, in order to obtain a lower bound on $\vol{C_1}$, we want to lower bound
$1 - \frac{c_1^2}{r_1^2}$
or equivalently, upper bound $\frac{c_1^2}{r_1^2}.$

  Write $r_1 = \frac{\alpha}{2\sqrt{\dim}}$ for some $\alpha \in [1,2]$.
  From \eqr{cap-height},
  \[
    c_1 = \frac{1}{4\dim} + \frac{\alpha^2}{8\dim} - \frac{1}{32\dim^2}.
  \]
  Hence,
  \[
  \frac{c_1}{r_1} = \frac{1}{16\sqrt{\dim}}\paren{\frac{8}{\alpha} + 4\alpha - \frac{1}{\dim}}
  \]
  Since $g(\alpha) = \frac{8}{\alpha} + 4\alpha$ is convex in $[1,2]$, $g(\alpha) \leq \max(g(1), g(2)) = 12.$ It follows that $\frac{c_1}{r_1} \leq \frac{1}{16\sqrt{\dim}}\paren{12 - \frac{1}{\dim}} \leq \frac{3}{4 \sqrt{\dim}}.$
So, $1 - \frac{c_1^2}{r_1^2} \geq 1 - \frac{9}{16\dim}.$ To complete the proof,
note that
$V_\dim := I_{1 - \frac{9}{16\dim}}(\frac{\dim + 1}{2}, \frac{1}{2})$ is increasing in $\dim$, and $V_1 = \ballconst$. As $\dim$ goes to infinity, this value converges to $1$ as $B_1 \subset B_2$.
\end{proof}

\begin{proof}[Proof of Lemma~\ref{thm:decay_probability}]
Let $\nu = \frac{1}{5\dim\Q}$.
Let $q = (1-\nu)\xx + \nu \point^*$.
Let $B_q = \cB(c_q, r_q)$ be a ball that has $q$ on its surface, lies inside $\sublevelset_q$, and has radius $r_q = Q^{-1}\|x - x^*\|$. Lemma~\ref{lem:enclosed_ball} guarantees its existence.

Suppose that 
\begin{equation}
\label{eq:decay_probability:ball}
\textstyle \vol{B_x \cap B_q} \geq \frac{1}{4} \vol{B_x}    
\end{equation}
and that a random sample $y$ from $B_x$ belongs to $B_q$, which happens with probability at least $\frac{1}{4}$. Then, our guarantee follows by
\begin{align*}
f(y) - f(x^*) &\leq f(q) - f(x^*) \\
&\leq (1-\nu)f(x) + \nu f(x^*) - f(x^*) \\
&\leq (1-\nu)\paren{f(x) -f(x^*)} 
\end{align*}
where the first line follows from Lemma~\ref{lem:enclosed_ball} and second line from convexity of $f$.

Therefore, it now suffices to prove Eq.~\ref{eq:decay_probability:ball}. To do so, we will apply Lemma~\ref{lem:ball-intersection} after showing that the radius of $B_x$ and $B_q$ are in the proper ranges.
Let $\ell = \norm{x - c_q}$ and note that
\begin{align}
\ell &\leq \|\xx - q\| + r_q \label{eq:ell_rq}\\
&\leq \nu \xd{\xx} + r_q  = \nu \xd{\xx} + Q^{-1} \xd{q} \notag \\
&\leq \textstyle (\nu + Q^{-1}(1-\nu))\xd{x} \label{eq:ell_lx} \\
&\leq \textstyle \frac{6}{5Q}\bd{x} \notag.
\end{align}
Since $\xx$ is outside of $B_q$, we also have
\begin{align}
\ell &\geq r_q = Q^{-1}\xd{q} = Q^{-1}(1-\nu)\xd{\xx} \notag \\
& \textstyle \geq \frac{4}{5Q}\bd{x}. \label{eq:ell_ux}
\end{align}

It follows that
\[\frac{\ell}{2} \leq \frac{3}{5Q}\bd{x}  \leq \ell.\]
In the $\log_2$ space, our choice of $k_1$ is equivalent to starting from $\log_2 C_1$ and sweeping through the range $[\log_2 C_1, \log_2 C_2]$ at the interval of size 1. This is guaranteed to find a point between $\frac{\ell}{2}$ and $\ell$, which is also an interval of size 1. Therefore, there exists a $k_1$ satisfying the theorem statement, and similarly, we can prove the existence of $k_2$.
\\ \\
Finally, it remains to show that \(r_q \geq (1 - 1/(4 \dim))\ell.\)
From Eq. \eqref{eq:ell_rq}, it suffices to show that $\|x - q\| \leq \frac{\ell}{4\dim}$ or equivalently $\nu\xd{\xx} \leq \frac{\ell}{4\dim}$. From Eq. \eqref{eq:ell_lx},
\[\|x - q\| = \nu \xd{\xx} \leq \nu Q(1-\nu)^{-1}\ell.\]
For any $\Q, \dim \geq 1$, $1 - \nu \geq \frac{4}{5}$. So,
\begin{equation}
\label{eq:good_event:1}
\textstyle
  \nu Q(1-\nu)^{-1} = \frac{1}{5\dim}(1 - \nu)^{-1}\leq \frac{1}{4\dim}
\end{equation}
and the proof is complete.
\end{proof}

\begin{proof}[Proof of Lemma~\ref{lem:gaussian-intersection}]
Without loss of generality, let $\dc = 1$ and $B_2$ is centered at the origin with radius $r_2$ and $B_1$ is centered at $e_1 = (1, 0, ..., 0)$. Then, we simply want to show that 
$$\prob{(1+X_1)^2 + \sum_{i=2}^\dim X_i^2 \leq r_2^2  } > c_\dim $$

By Markov's inequality, we see that $\sum_{i=2}^\dim X_i^2 \leq 2r_1^2 = 2/\dim$ with probability at most $1/2$. And since $X_1$ is independent and $r_2 \geq 1 - 1/\dim$, it suffices to show that 

$$\prob{(1+X_1)^2 \leq 1 - 4/\dim  } > \Omega(1) $$

Since $X_1$ has standard deviation at least $r_1/\sqrt{n} \geq 1/(2n)$, we see that the probability of deviating at least a few standard deviation below is at least a constant.
\end{proof}

\begin{proof}[Proof of Theorem~\ref{thm:decay_lowdim}]
We can consider the projection of all points onto the column space of $A$ and since the Gaussian sampling process is preserved, our proof follows from applying Theorem~\ref{thm:decay_probability} restricted onto the $k$-dimensional subspace and using Lemma~\ref{lem:gaussian-intersection} in place of Lemma~\ref{lem:ball-intersection}.
\end{proof}

\begin{proof}[Proof of Theorem~\ref{thm:decay_lowdim_extended}]
By the boundedness of $h$, since $f(x) - f(x^*) \geq 60 \delta k Q_g(\A)$, we see that $g(\PA x) - g(x^*) \geq 60 \delta k Q_g(\A) - 2\delta > 0$. By Lemma~\ref{lem:gaussian-intersection}, we see that if we sample from a Gaussian distribution $y \sim \cN(x, \frac{r^2}{k}\mathbf{I})$, then if $z^*$ is the minimum of $g(x)$ restricted to the column space of $\A$, then 

$$g(\PA y) - g(z^*) \leq \paren{g(\PA x) - g(z^*)}\paren{1 - \frac{1}{5kQ_g(\A)}}$$

with constant probability. By boundedness on $h$, we know that $h(y) \leq h(x) + 2\delta$. Furthermore, this also implies that $g(\PA x^*)  \leq g(z^*) + 2\delta$. Therefore, we know that the decrease is at least 

\begin{align*}
f(y) - f(x^*)  &= g(\PA y) - g(\PA x^*) + h(y) - h(x^*) \\
&\leq g(\PA y ) - g(z^*) + 2\delta \\
&\leq  \paren{g(\PA x) - g(z^*)}\paren{1 - \frac{1}{5kQ_g(\A)}} + 2\delta  \\
&\leq  \paren{g(\PA x) - g(\PA x^*) + 2\delta}\paren{1 - \frac{1}{5kQ_g(\A)}} + 2\delta  \\
&\leq  \paren{f(x) - f(x^*) + 4\delta}\paren{1 - \frac{1}{5kQ_g(\A)}} + 2\delta  \\
&\leq  \paren{f(x) -f(x^*) }\paren{1 - \frac{1}{5kQ_g(\A)}} + 6\delta  \\
\end{align*}

Since $f(x) - f(x^*) \geq 10 \delta k Q_g(A)$, we conclude that $$\paren{f(x) -f(x^*) }\paren{1 - \frac{1}{5kQ_g(\A)}} + 6\delta \leq \paren{f(x) -f(x^*) }\paren{1 - \frac{1}{10kQ_g(\A)}}$$ and our proof is complete.
\end{proof}

\begin{proof}[Proof of Theorem~\ref{thm:lowergaussian}]
Our main proof strategy is to show that progress can only be made with a radius size of $O(\sqrt{\log(nQ)}/(nQ))$; larger radii cannot find descent directions with high probability. Consider a simple ellipsoid function $f(x) = x^\top D x$, where $D$ is a diagonal matrix and $D_{11} \leq D_{22} \leq ... \leq D_{nn}$, where WLOG we let $D_{11} = 1$ and $D_{ii} = Q$ for $i > 1$. The optima is $x^* = 0$ with $f(x^*) = 0$.

Consider the region $X = \set{ x=(x_1, x_2,..., x_n) | 1 \geq x_1 \geq 0.9, \, |x_i | \leq 0.1/(Q\sqrt{n})}$. Then, if we let $v \sim N(0, I)$ be a standard Gaussian vector, then for some radius $r$, we see that the probability of finding a descent direction is:
\begin{align*}
    \prob{f(x + rv) \leq f(x) } &= \prob{(x_1+rv_1)^2 +  \sum_{i>1} D_{ii} (x_i + rv_i)^2 \leq x_1^2 + \sum_{i>1} D_{ii} x_i^2 } \\
    &= \prob{2rx_1v_1 + r^2 v_1^2 +  Q\sum_{i>1} (2rx_iv_i + r^2 v_i^2) \leq 0 } \\
    &\leq \prob{2rx_1v_1  \leq -Q\sum_{i>1} 2rx_iv_i - Q r^2 \sum_{i > 1} v_i^2  } \\
    &= \prob{v_1 X_1  \leq -Q\sum_{i>1} x_i v_i - \frac{1}{2}Q r \sum_{i > 1} v_i^2  } \\
\end{align*}
By standard concentration bounds for sub-exponential variables, we have

$$\prob{|\frac{1}{n-1} \sum_{i > 1} v_i^2 - 1| \geq t} \leq 2e^{-(n-1)t^2/8}$$

Therefore, with exponentially high probability, $\sum_{i>1} X_i^2 \geq n/2$. Also, since $|x_i| \leq 0.1/(Q\sqrt{n})$, Chernoff bounds give:

$$\prob{\left|\sum_{i>1} x_iv_i  \right| \geq t} \leq 2e^{-50(Qt)^2}  $$

Therefore, with probability at least $1 - 1/(nQ)^3$, $|\sum_{i>1} v_i X_i| \leq \sqrt{\log(nQ)}/Q$. 

If $Qrn \geq \Omega(\sqrt{\log(nQ)})$, then we have 
\begin{align*}
-Q\sum_{i>1} v_i X_i - \frac{1}{2}Q r \sum_{i > 1} X_i^2  &\leq - \Omega( \sqrt{\log(nQ)})\\ 
\end{align*}

We conclude that the probability of descent is upper bounded by $\prob{v_1X_1 \leq - \Omega( \sqrt{\log(nQ)})}$. This probability is exactly $\Phi(-l)$, where $\Phi$ is the cumulative density of a standard normal and $l = \Omega(\sqrt{\log(nQ)})$. By a naive upper bound, we see that 

\begin{align*}
\Phi(-l) &= \frac{1}{\sqrt{2\pi}}\int_{l}^\infty e^{-x^2/2 } \, dx \\
&\leq \frac{C}{l}\int_{l}^\infty x e^{-x^2/2 } \, dx \\
&= \frac{C}{l}e^{-l^2/2} \\
\end{align*}

Since $l = \Omega(\sqrt{\log(nQ)})$, we conclude that with probability at least $ 1 - 1/poly(nQ)$, we have $f(y) - f(x^*) \geq f(x) - f(x^*)$. 
\\ \\ 
Otherwise, we are in the case that $Qrn \leq O(\sqrt{\log(nQ)})$. Arguing similarly as before, with high probability, our objective function and each coordinate can change by at most $O(\sqrt{\log(nQ)}/(Qn))$.
\\ \\
Next, we extend our proof to any symmetric distribution $\D$. Since $\D$ is rotationally symmetric, if we parametrize $v = (r, \theta)$ is polar-coordinates, then the p.d.f. of any scaling of $\D$ must take the form $p(v) = p_r(r)u(\theta)$, where $u(\theta)$ induces the uniform distribution over the unit sphere. Therefore, if $Y$ is a random variable that follows $\D$, then we may write $Y = Rv/\|v\|$, where $R$ is a random scalar with p.d.f $p_r(r)$ and $v$ is a standard Gaussian vector and $R, X$ are independent.
\\ \\
As previously argued, $\|v\| \in [0.5n, 1.5n]$ with exponentially high probability. Therefore, if $R \geq \Omega(\sqrt{\log(nQ)}/Q)$, the same arguments will imply that $Y$ is a descent direction with polynomially small probability. Thus, when $Y$ is a descent direction, it must be that $R \leq \Omega(\sqrt{\log(nQ)}/Q)$ and as argued previously, our lower bound follows similarly.

\end{proof}

\section{Proofs of Section~\ref{sec:algorithms}}

\begin{proof}[Proof of Theorem~\ref{thm:convergence_gradientless}]
By the Gaussian version of Theorem~\ref{thm:decay_probability} (full rank version of Theorem~\ref{thm:decay_lowdim}), as long as our binary search sweeps between minimum search radius $r \leq \frac{3}{5Q\sqrt{\dim}}\|x - x^*\|$ and maximum search radius of the diameter of the whole space $R = \|\X\|$, the objective value will decrease multiplicatively by $1 - \frac{1}{5nQ}$ in each iteration with constant probability. Therefore, if $\|x_t-x^*\| \geq 2Q\epsilon$ and we set $r = \frac{\epsilon}{\sqrt{\dim}}$ and $R = \|\X\|$, then with high probability, we expect $f(x_T) -f(x^*) \leq \beta Q^2\epsilon^2$ after $T = O(nQ\log(\|\X\|/( Q\epsilon)))$ iterations, where we note that $F = f(x_0) - f(x^*) \leq \beta\|\X\|^2$ by smoothness. 
\\ \\
Otherwise, if there exists some $x_t $ such that $\|x_t - x^*\| \leq 2Q\epsilon$, then $f(x_T) - f(x^*) \leq f(x_t) - f(x^*) \leq 4\beta Q^2 \epsilon^2$. Therefore, by strong convexity, we conclude that in either case, $\|x_T - x^*\| \leq 2Q^{3/2}\epsilon$. Finally note that each iteration uses a binary search that requires $O(\log(R/r)) = O(\log(n\|\X\|/\epsilon))$ function evaluations. 
\\ \\
Therefore, by combining these bounds, we derive our result. The low-rank result follows from applying Theorem~\ref{thm:decay_lowdim} and Theorem~\ref{thm:decay_lowdim_extended} instead. 
\end{proof}

\begin{proof}[Proof of Theorem~\ref{thm:convergence_gradientless_fast}]
Let $H = O(nQ \log(Q))$ be the number of iterations between successive radius halving and we initialize $R = \|\X\|$ and half $R$ every $H$ iterations. We call the iterations between two halving steps an epoch. We claim that $\|x_i - x_0\|\leq R$ for all iterations and proceed with induction on the epoch number. The base case is trivial.
\\ \\
Assume that $\|x_i - x_0\| \leq R$ for all iterations in the previous epoch and let iteration $i_s$ be the start of the epoch and iteration $i_s + H$ be the end of the epoch. Then, since $\|x_{i_s} - x^*\| \leq R$, we see that $f(x_{i_s}) - f(x^*) \leq \beta R^2$ by smoothness. If $\frac{R}{4\sqrt{Q}} \leq \|x_{i} - x^* \| \leq 4\sqrt{Q}R$ for all $i$ in the previous epoch, then by the Gaussian version of Theorem~\ref{thm:decay_probability} (Theorem~\ref{thm:decay_lowdim}), since we do a binary sweep from $\frac{R}{4Q}$ to $4\sqrt{Q}R$, we can choose $\D$ accordingly so that we are guaranteed that our objective value will decrease multiplicatively by $1 - \frac{1}{5nQ}$  with constant probability at a cost of $O(\log(Q))$ function evaluations per iteration. This implies that with high probability, after $O(nQ \log(Q))$ iterations, we conclude
$$f(x_{i_s+H}) - f(x^*) \leq \frac{1}{4Q}(f(x_{i_s}) -f(x^*)) \leq \frac{\alpha}{4} \|x_{i_s} - x^*\|^2 \leq \frac{\alpha}{4} R^2 $$
\\ \\
Otherwise, there exists some $1 \leq j \leq H$ such that $\|x_{i_s+j} - x^*\| \geq 4\sqrt{Q}R$ or $\|x_{i_s+j} - x^*\| \leq \frac{R}{4\sqrt{Q}}$. If it is the former, then by strong convexity, $f(x_{i_s+j}) - f(x^*) \geq \alpha \|x_{i_s+j} - x^*\|^2 \geq 2\beta R^2$, which contradicts the fact that $f(x_{i_s}) - f(x^*) \leq \beta R^2$ by smoothness. If it is the latter, then by smoothness, we reach the same conclusion:
$$f(x_{i_s+H}) - f(x^*) \leq f(x_{i_s+j}) - f(x^*) \leq \beta \|x_{i_s+j} - x^*\|^2  \leq \frac{\alpha}{4} R^2 $$

Therefore, by strong convexity, we have 
$$\|x_{i_s+H} - x^*\| \leq \sqrt{\frac{f(x_{i_s+H}) - f(x^*)}{\alpha}} \leq \frac{R}{2}$$

And our induction is complete. Therefore, we conclude that after $\log(\|\X\|/ \epsilon)$ epochs, we have $\|x_T - x^* \| \leq \epsilon$. Each epoch has $H$ iterations, each with $O(\log (Q))$ function evaluations and so our result follows.
\\ \\
The low-rank result follows from applying Theorem~\ref{thm:decay_lowdim} and Theorem~\ref{thm:decay_lowdim_extended} instead. However, note that since we do not know the latent dimension $k$, we must extend the binary search to incur an extra $\log(n)$ factor in the binary search cost.
\end{proof}
\section{Figures}
\label{sec:appendix_figures}

\begin{figure}[H]
    \includegraphics[width=\textwidth]{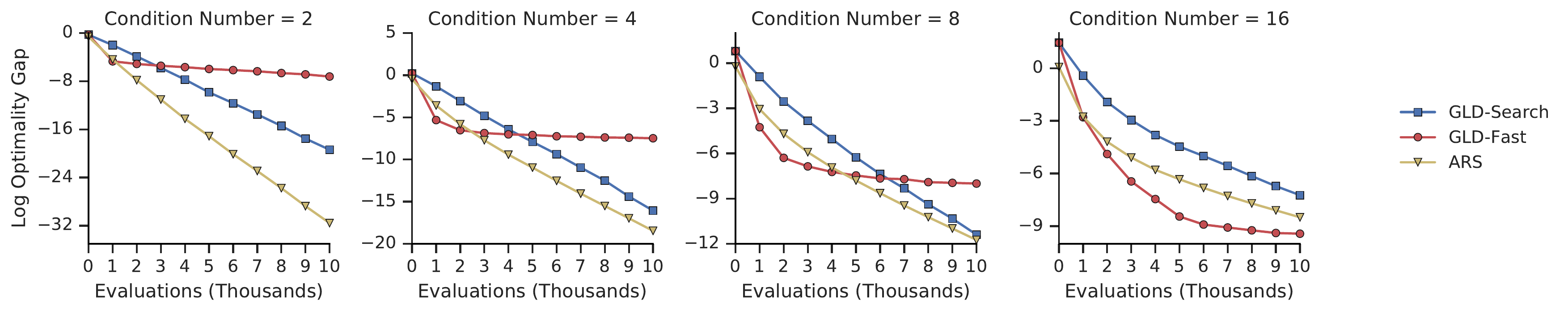}
    \caption{The average optimality gap by the condition number of the objective function.}
    \label{fig:bigq}
\end{figure}

\begin{figure}[H]
    \includegraphics[width=\textwidth]{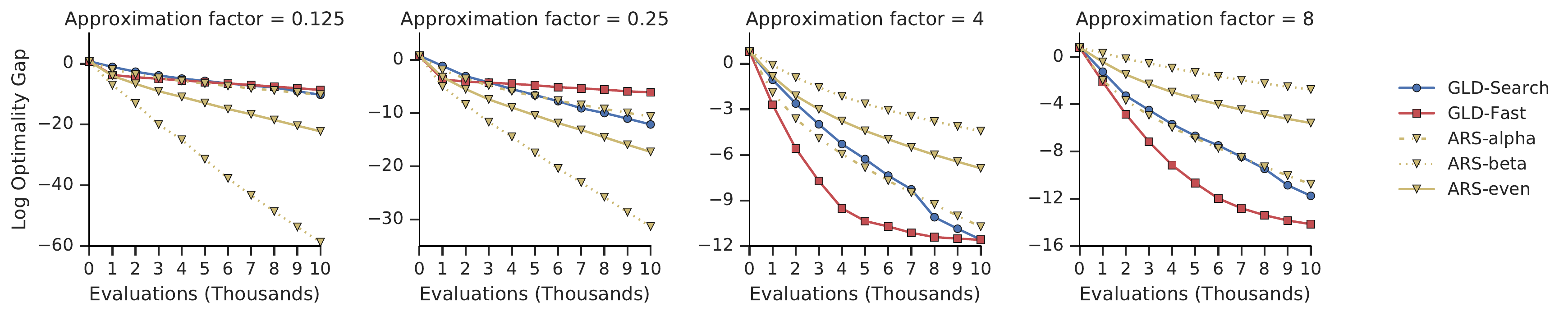}
    \caption{The average optimality gap by the accuracy of the condition number estimate, where approximation factor is the ratio of estimated to true condition number. The dimension $\dim = 20$.}
    \label{fig:badq}
\end{figure}

\subsection{BBOB Function Plots}

\begin{figure}[H] 
  \caption{Convergence plot for the BBOB Rastrigin Function.} 
  \centering
\includegraphics[keepaspectratio, width=1.0\textwidth]{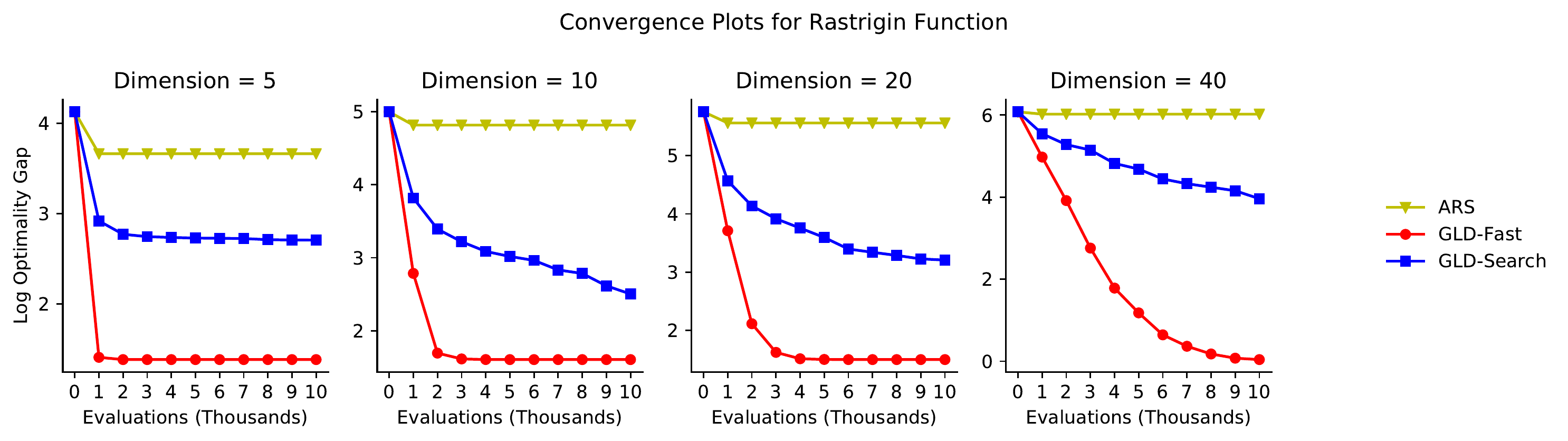}
\end{figure}

\begin{figure}[H] 
  \caption{Convergence plot for the BBOB BentCigar Function.} 
  \centering
\includegraphics[keepaspectratio, width=1.0\textwidth]{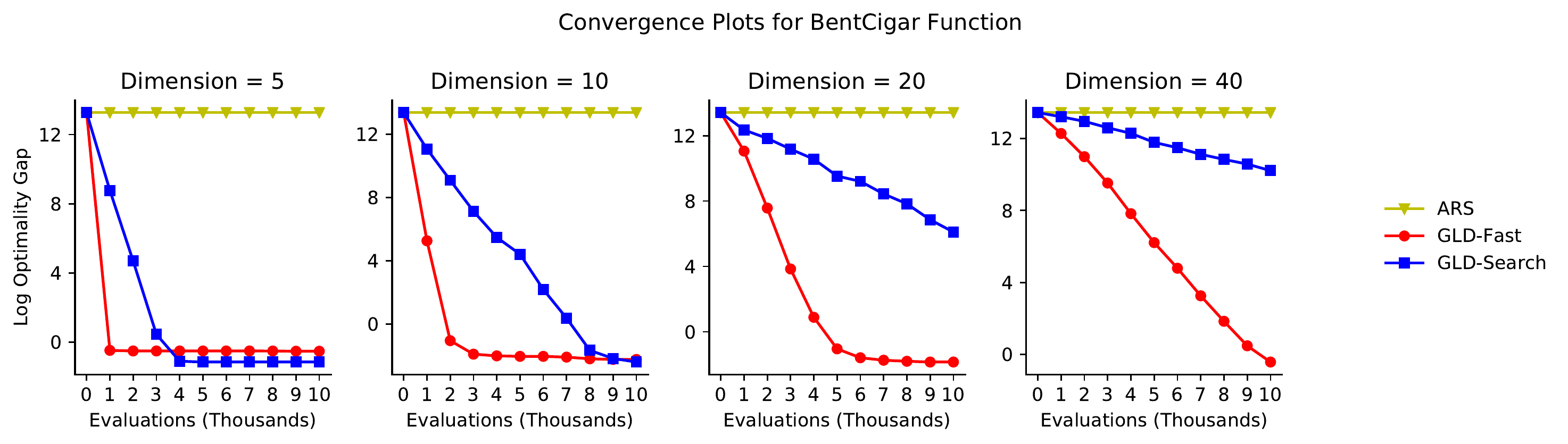}
\end{figure}

\begin{figure}[H] 
  \caption{Convergence plot for the BBOB BuecheRastrigin Function.} 
  \centering
\includegraphics[keepaspectratio, width=1.0\textwidth]{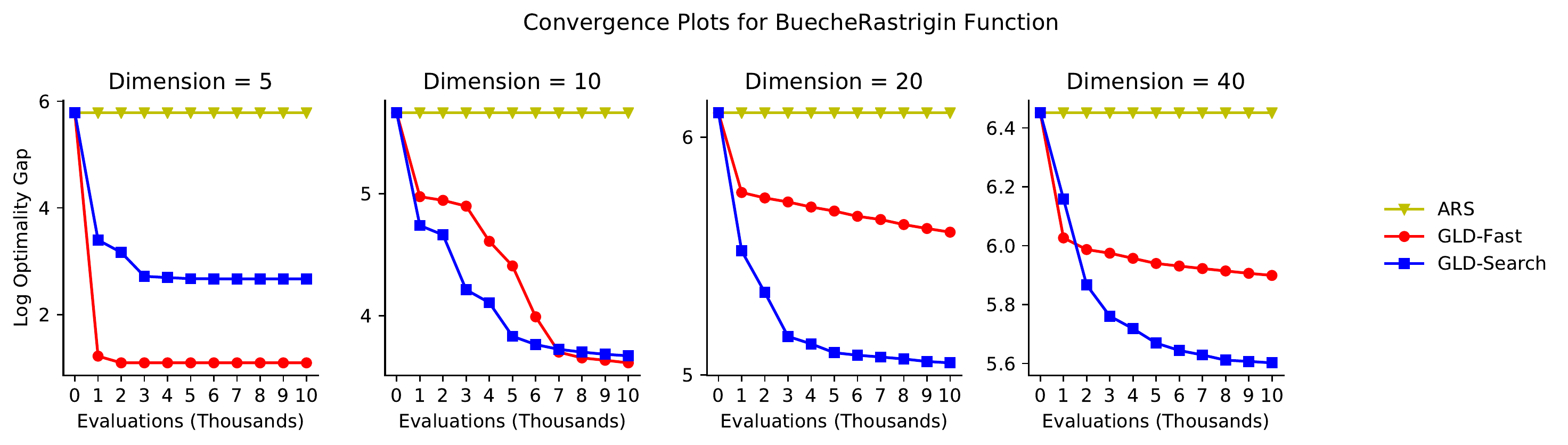}
\end{figure}

\begin{figure}[H] 
  \caption{Convergence plot for the BBOB DifferentPowers Function.} 
  \centering
\includegraphics[keepaspectratio, width=1.0\textwidth]{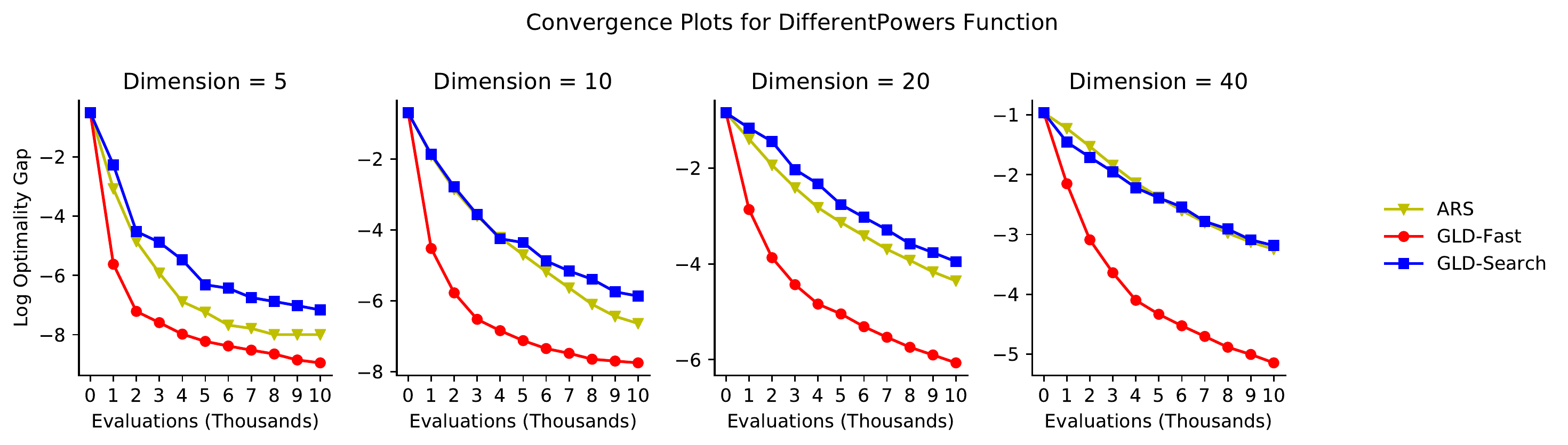}
\end{figure}

\begin{figure}[H] 
  \caption{Convergence plot for the BBOB Discus Function.} 
  \centering
\includegraphics[keepaspectratio, width=1.0\textwidth]{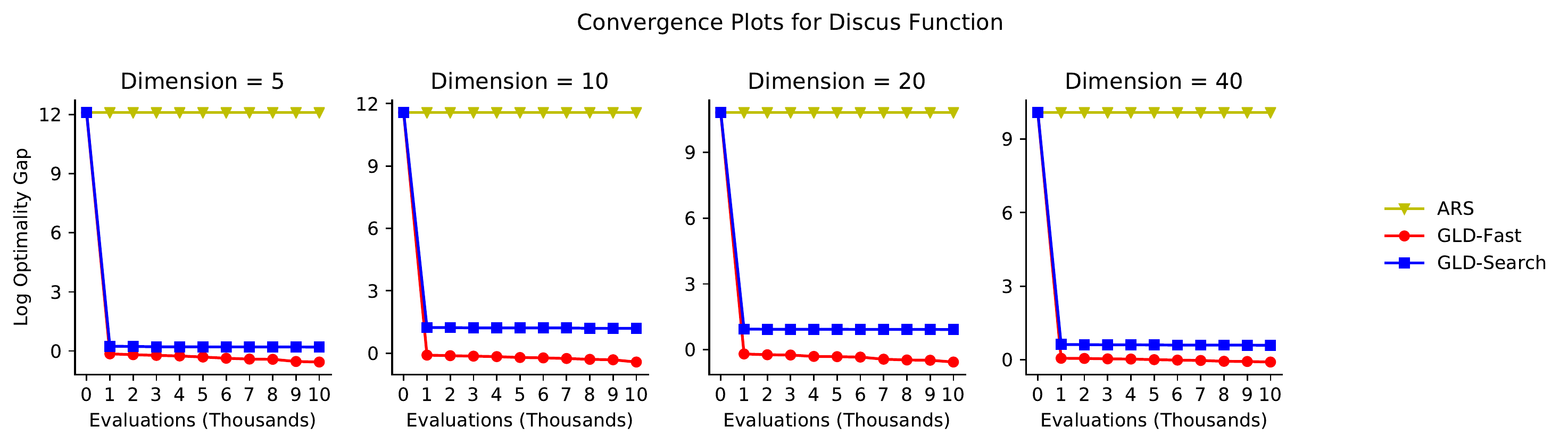}
\end{figure}

\begin{figure}[H] 
  \caption{Convergence plot for the BBOB Ellipsoidal Function.} 
  \centering
\includegraphics[keepaspectratio, width=1.0\textwidth]{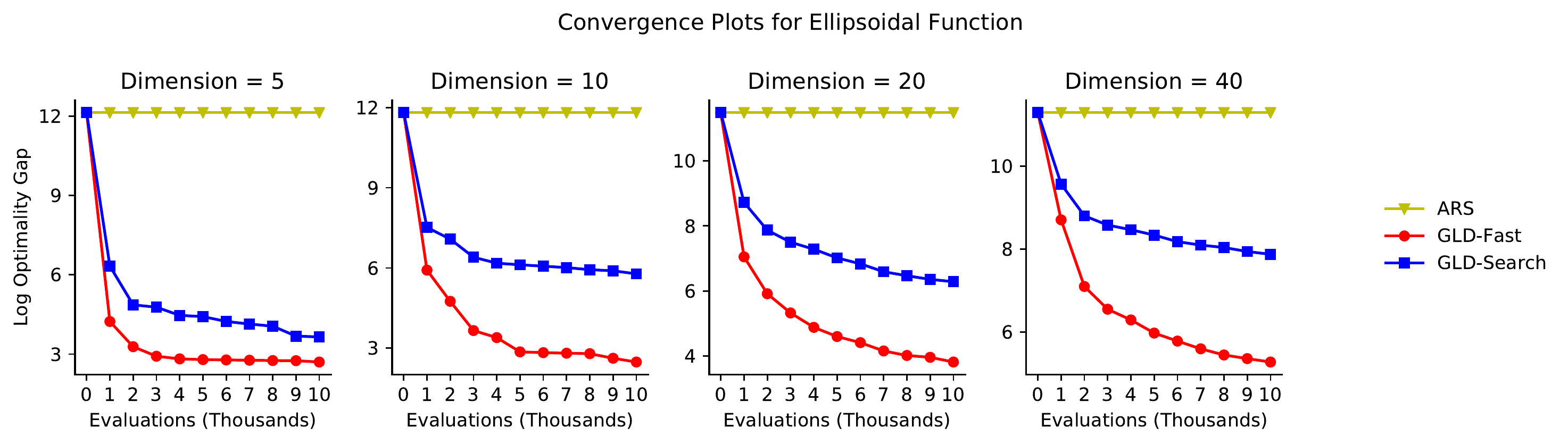}
\end{figure}

\begin{figure}[H] 
  \caption{Convergence plot for the BBOB Katsuura Function.} 
  \centering
\includegraphics[keepaspectratio, width=1.0\textwidth]{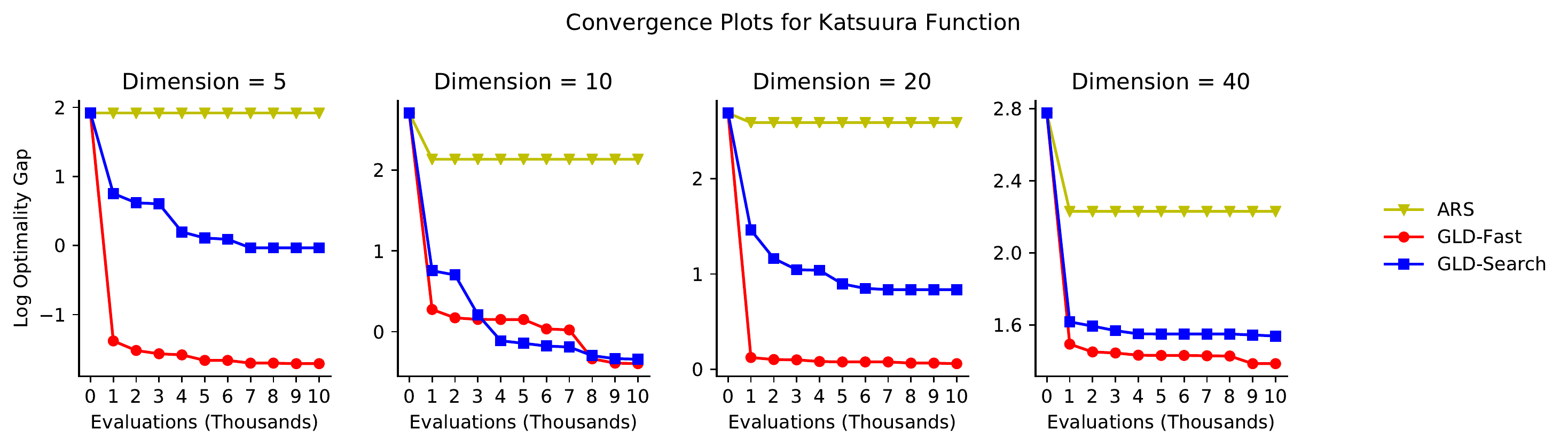}
\end{figure}

\begin{figure}[H] 
  \caption{Convergence plot for the BBOB SchaffersF7 Function.} 
  \centering
\includegraphics[keepaspectratio, width=1.0\textwidth]{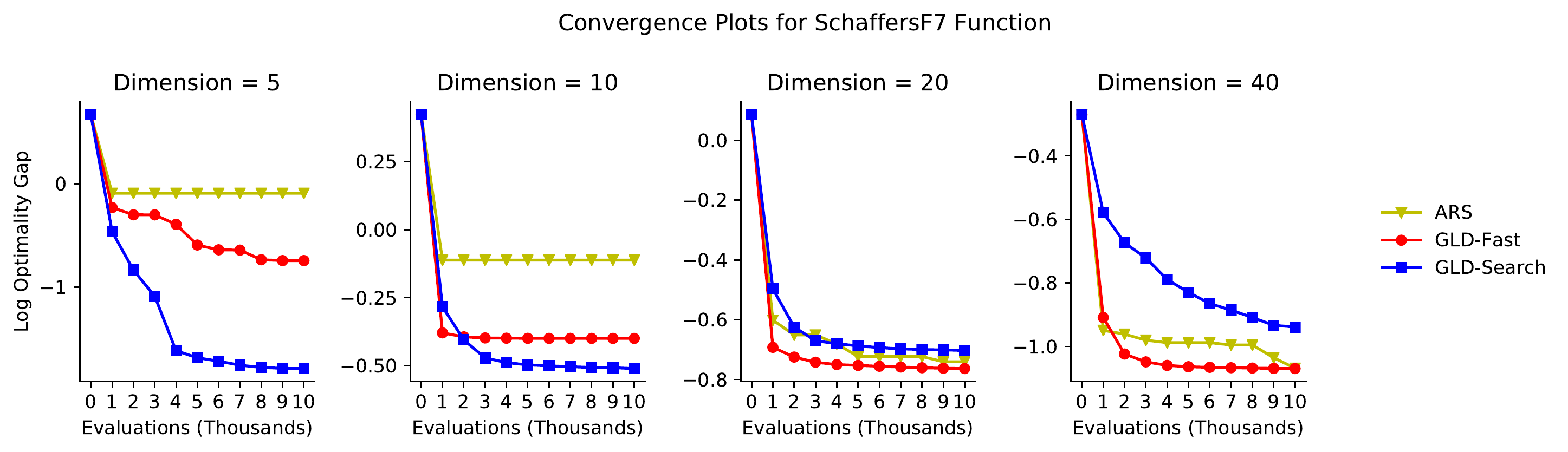}
\end{figure}

\begin{figure}[H] 
  \caption{Convergence plot for the BBOB Ill-Conditioned SchaffersF7 Function.} 
  \centering
\includegraphics[keepaspectratio, width=1.0\textwidth]{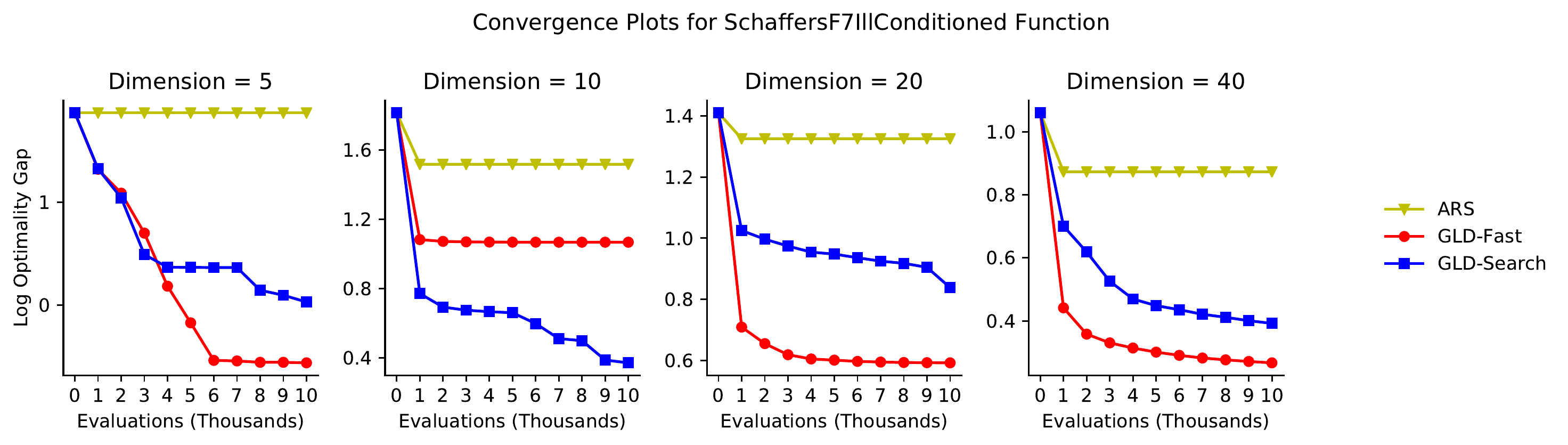}
\end{figure}

\begin{figure}[H] 
  \caption{Convergence plot for the BBOB SharpRidge Function.} 
  \centering
\includegraphics[keepaspectratio, width=1.0\textwidth]{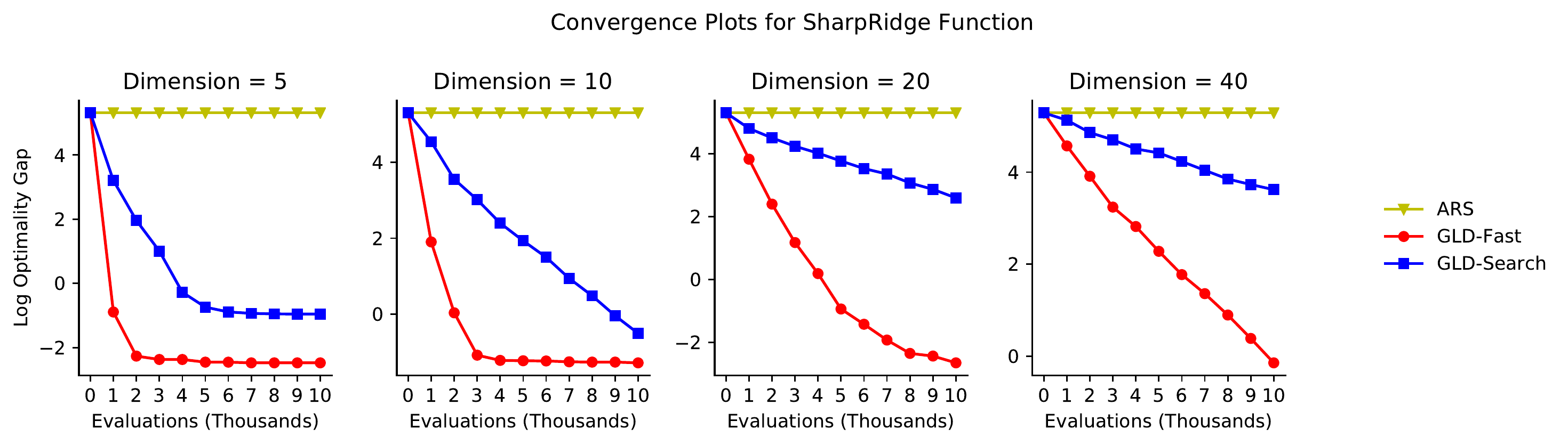}
\end{figure}

\begin{figure}[H] 
  \caption{Convergence plot for the BBOB Weierstass Function.} 
  \centering
\includegraphics[keepaspectratio, width=1.0\textwidth]{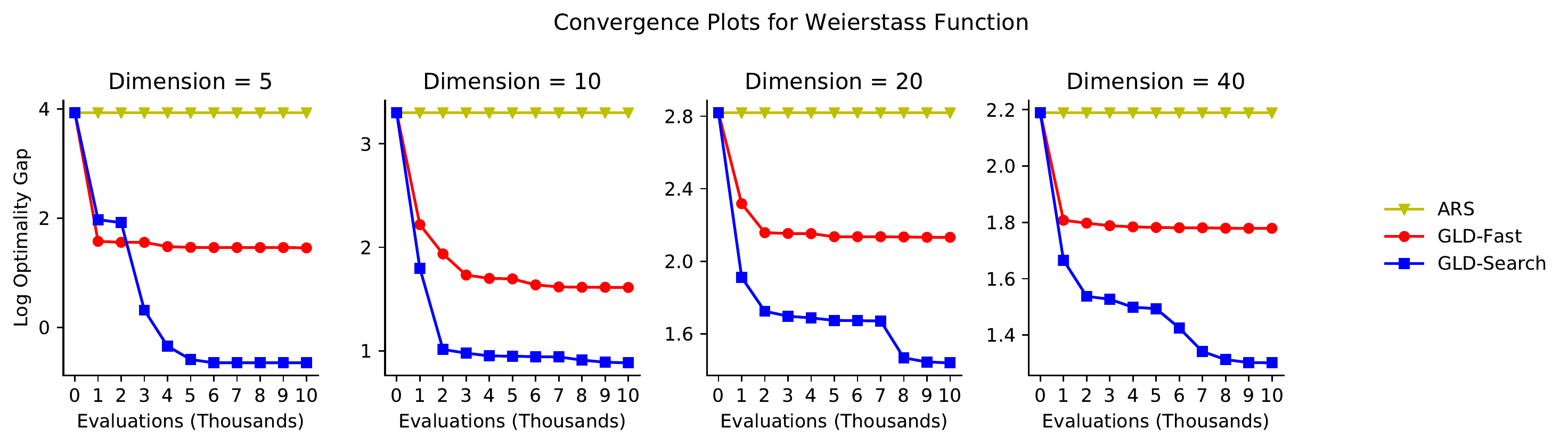}
\end{figure}

\newpage
\subsection{Mujoco Control Plots}

\begin{figure}[h] 

  \caption{Plot of maximum reward so far found by algorithm. Main line is the median trajectory across 3 runs.} 
  \centering
\includegraphics[keepaspectratio, width=1.0\textwidth]{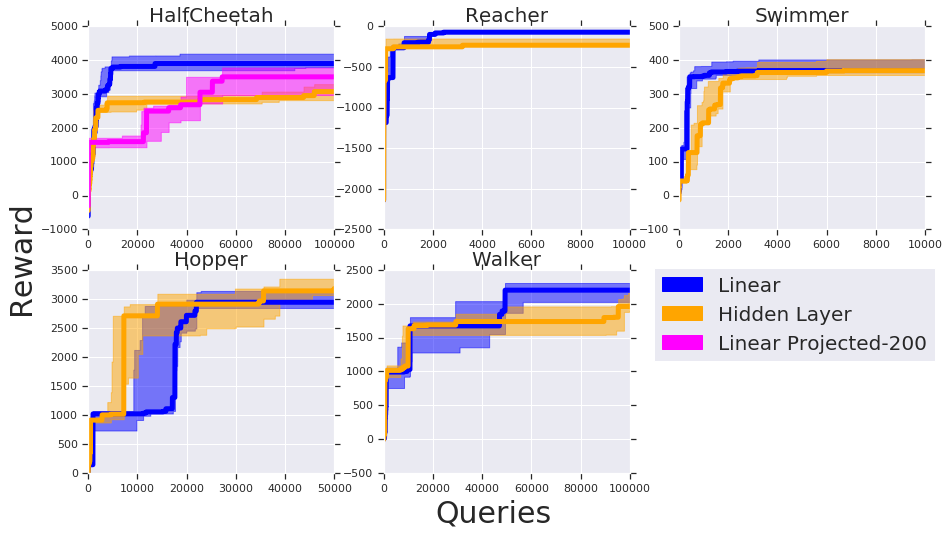}
\label{fig:mujoco_plot}
\end{figure}

\begin{figure}[h]
  \caption{Example of rewards found by all samples by algorithm.} 
  \centering
\includegraphics[keepaspectratio, width=1.0\textwidth]{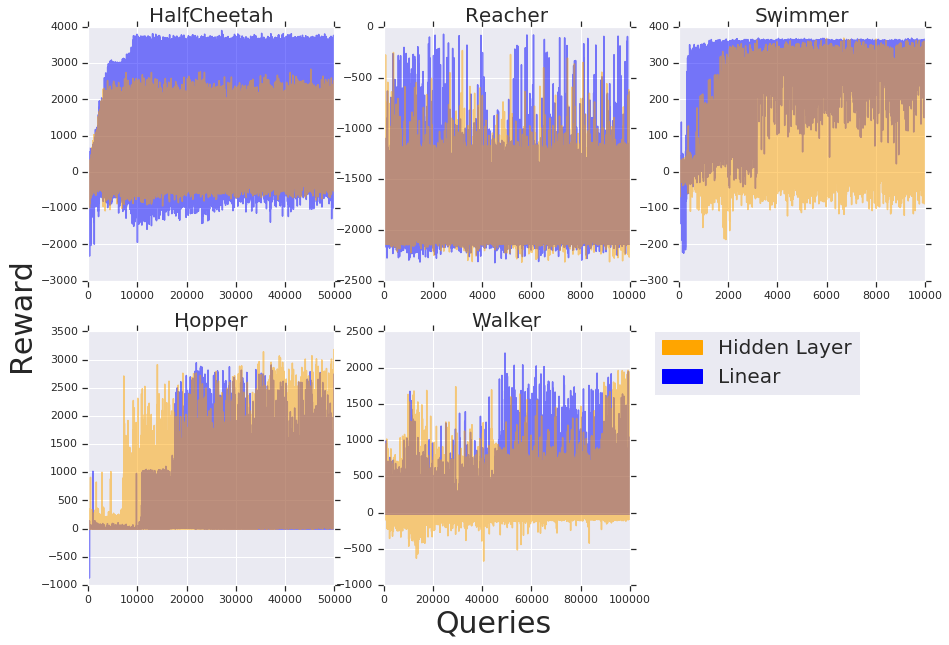}
\end{figure}

\end{document}